\documentclass{article}

\usepackage[preprint, nonatbib]{neurips_2025}

\usepackage[utf8]{inputenc} \usepackage[T1]{fontenc}    \usepackage[hidelinks]{hyperref}       \usepackage{url}            \usepackage{amsfonts}       \usepackage{nicefrac}       \usepackage{microtype}      

\usepackage[textsize=tiny]{todonotes}

\usepackage[
    backend=biber,
    maxbibnames=99, 
    maxcitenames=2, 
    natbib=true, 
    style=numeric, 
    date=year,
    isbn=false, 
    url=false, 
    eprint=false, 
    giveninits=true, 
    sortcites
]{biblatex} 

\addbibresource{stable port-Hamiltonian NNs.bib}

\DeclareFieldFormat{note}{\mkbibparens{License: #1}}

\AtEveryBibitem{%
  \ifentrytype{online}{%
    \clearfield{url}%
    \clearfield{eprint}%
    \clearfield{urlyear}    \clearfield{urlmonth}%
  }{}%
}

\AtEveryBibitem{\clearname{editor}}

\usepackage[german, main=english]{babel}
\usepackage[autostyle]{csquotes}

\usepackage{amsmath}
\usepackage{amssymb} \usepackage{mathtools} \usepackage{bm} \usepackage{dsfont} \usepackage{amsthm} 
\usepackage[acronym]{glossaries-extra}   \GlsXtrEnableEntryUnitCounting      {abbreviation}    {1}    {document}  
\glsdisablehyper

\usepackage[capitalize,noabbrev,english]{cleveref}

\usepackage{xcolor}
\usepackage{enumitem}   \usepackage{xspace}     
\usepackage{siunitx}

\graphicspath{ {./figures/} }
\usepackage[
     format=plain,           labelformat=simple,      labelsep=period,      textformat=simple,           labelfont=footnotesize,sl,      textfont=footnotesize,      ]{caption}
\usepackage{graphicx}
\usepackage{subcaption}
\captionsetup{subrefformat=parens}  

\usepackage{tabularx}   \usepackage{multirow}   \usepackage{makecell}   \usepackage{booktabs}   

\usepackage{pgfplots}
\pgfplotsset{width=10cm,compat=1.8}
\usepgfplotslibrary{groupplots, statistics, fillbetween}
\usetikzlibrary{calc, spy, decorations.markings}

\usepgfplotslibrary{external}
\tikzexternalize[prefix=externalized_figures/]

\makeatletter
\renewcommand{\todo}[2][]{\tikzexternaldisable\@todo[#1]{#2}\tikzexternalenable}
\makeatother

\newcommand{\bszero}{\boldsymbol{0}}
\newcommand{\bsA}{\boldsymbol{A}}

\newcommand{\bsG}{\boldsymbol{G}}

\newcommand{\bsI}{\boldsymbol{I}}
\newcommand{\bsJ}{\boldsymbol{J}}

\newcommand{\bsL}{\boldsymbol{L}}
\newcommand{\bsM}{\boldsymbol{M}}

\newcommand{\bsR}{\boldsymbol{R}}

\newcommand{\bsT}{\boldsymbol{T}}
\newcommand{\bsU}{\boldsymbol{U}}
\newcommand{\bsV}{\boldsymbol{V}}
\newcommand{\bsW}{\boldsymbol{W}}
\newcommand{\bsX}{\boldsymbol{X}}

\newcommand{\bsa}{\boldsymbol{a}}
\newcommand{\bsb}{\boldsymbol{b}}

\newcommand{\bsf}{\boldsymbol{f}}

\newcommand{\bsq}{\boldsymbol{q}}

\newcommand{\bsu}{\boldsymbol{u}}

\newcommand{\bsx}{\boldsymbol{x}}

\newcommand{\bsz}{\boldsymbol{z}}

\newcommand{\bsomega}{\boldsymbol{\omega}}

\newcommand{\bsSigma}{\boldsymbol{\Sigma}}

\newcommand{\cH}{{\mathcal{H}}}

\newcommand{\bbR}{{\mathbb{R}}}

\newcommand{\partdiff}[3][]{\frac{\partial^{#1}#2}{\partial#3^{#1}}}
\newcommand{\hessian}[2]{\frac{\partial^2{#1}}{\partial{#2}\partial{#2}}}

\newcommand\norm[2][]{\left\lVert#2\right\rVert_{#1}}
\DeclareMathOperator{\diag}{diag}

\DeclareMathOperator*{\argmin}{arg\,min}

\newcommand{\hamiltonian}{\cH}

\theoremstyle{plain}
\newtheorem{theorem}{Theorem}[section]

\newtheorem{lemma}[theorem]{Lemma}

\theoremstyle{definition}

\theoremstyle{remark}

\newcommand{\revision}[1]{#1}

\newcommand{\mymodel}{\gls{sPHNN}\xspace}
\newcommand{\mymodels}{\glspl{sPHNN}\xspace}

\newcommand{\NODE}{\gls{NODE}\xspace}
\newcommand{\NODEs}{\glspl{NODE}\xspace}

\newcommand{\CapNODEs}{\Glspl{NODE}\xspace}

\newcommand{\PHNN}{PHNN\xspace}
\newcommand{\PHNNs}{PHNNs\xspace}

\newcommand{\structurematrix}{structure matrix\xspace}

\newcommand{\inputTikzWithExternalization}[2]{%
    \includegraphics{externalized_figures/#1.pdf}%
}

\newcommand{\numsvdmodes}{\num{40}\xspace}

\newcommand{\giturl}{\url{https://github.com/CPShub/sphnn-publication}}

\newabbreviation[type=acronym]{0GAS}{0-GAS}{globally asymptotically stable at zero}
\newabbreviation[type=acronym]{AE}{AE}{autoencoder}
\newabbreviation[type=acronym]{APRBS}{APRBS}{amplitude-modulated pseudo-random binary sequence}
\newabbreviation[type=acronym]{bPHNN}{bPHNN}{bounded PHNN}
\newabbreviation[type=acronym]{DOF}{DOF}{degree of freedom}
\newabbreviation[type=acronym]{FFNN}{FFNN}{feedforward neural network}
\newabbreviation[type=acronym]{GENERIC}{GENERIC}{general equation for the nonequilibrium reversible-irreversible coupling}
\newabbreviation[type=acronym]{HNN}{HNN}{Hamiltonian neural network}
\newabbreviation[type=acronym]{ICNN}{ICNN}{input convex neural network}
\newabbreviation[type=acronym]{FICNN}{FICNN}{fully input convex neural network}
\newabbreviation[type=acronym]{FE}{FE}{finite element}
\newabbreviation[type=acronym]{FOM}{FOM}{full order model}
\newabbreviation[type=acronym]{IC}{IC}{initial condition}
\newabbreviation[type=acronym]{INN}{INN}{invertible neural network}
\newabbreviation[type=acronym]{IPHS}{IPHS}{irreversible port-Hamiltonian system}
\newabbreviation[type=acronym]{ISPHS}{ISPHS}{input-state port-Hamiltonian system}
\newabbreviation[type=acronym]{ISOPHS}{ISOPHS}{input-state-output port-Hamiltonian system}
\newabbreviation[type=acronym]{ISS}{ISS}{input-to-state stability}
\newabbreviation[type=acronym]{LNN}{LNN}{Lagrangian neural network}
\newabbreviation[type=acronym]{LTI}{LTI}{linear time invariant}
\newabbreviation[type=acronym]{ML}{ML}{machine learning}
\newabbreviation[type=acronym]{MSE}{MSE}{mean squared error}
\newabbreviation[type=acronym]{NN}{NN}{neural network}
\newabbreviation[type=acronym]{NODE}{NODE}{neural ordinary differential equation}
\newabbreviation[type=acronym]{NSDE}{neural SDE}{neural stochastic differential equation}
\newabbreviation[type=acronym]{ODE}{ODE}{ordinary differential equation}
\newabbreviation[type=acronym]{PCC}{PCC}{Pearson correlation coefficient}
\newabbreviation[type=acronym]{PDE}{PDE}{partial differential equation}
\newabbreviation[type=acronym, shortplural=PHS]{PHS}{PHS}{port-Hamiltonian system}
\newabbreviation[type=acronym]{PNN}{PNN}{Poisson neural network}
\newabbreviation[type=acronym]{POD}{POD}{proper orthogonal decomposition}
\newabbreviation[type=acronym]{ReLU}{ReLU}{rectified linear unit}
\newabbreviation[type=acronym]{RMSE}{RMSE}{root mean squared error}
\newabbreviation[type=acronym]{ROA}{ROA}{region of attraction}
\newabbreviation[type=acronym]{ROM}{ROM}{reduced order model}
\newabbreviation[type=acronym]{SDE}{SDE}{stochastic differential equation}
\newabbreviation[type=acronym]{sNODE}{sNODE}{stable NODE}
\newabbreviation[type=acronym]{SVD}{SVD}{singular value decomposition}
\newabbreviation[type=acronym]{sISPHS}{sISPHS}{stable ISPHS}
\newabbreviation[type=acronym]{sPHNN}{sPHNN}{stable port-Hamiltonian neural network}

\definecolor{CPSgreen}{RGB}{22,164,138}
\definecolor{CPSlightblue}{RGB}{104,143,198}
\definecolor{CPSdarkblue}{RGB}{67,83,132}
\definecolor{CPSgrey}{RGB}{204, 204, 204}
\definecolor{CPSorange}{RGB}{246,163,21}
\definecolor{CPSred}{RGB}{194,76,76}

\pgfplotsset{
    myboxplot/.style={
        boxplot={
        every box/.style={
            thick,
            },
                                every median/.style={
            very thick,
            CPSlightblue,
            }
        },
        mark=+,
        draw=black,
        fill=white,
    }
}

\pgfplotsset{
    myplotstyle/.style={
        label style={font=\scriptsize, inner sep=0},
        tick label style={font=\scriptsize},
        title style={font=\footnotesize, yshift=-1.5ex},
        legend style={
            nodes={scale=0.6, transform shape},
        },
                        label style={inner sep=0pt},         tick label style={inner sep=2pt},     }
}

\pgfplotsset{
    mycolorbarstyle/.style={
        label style={font=\scriptsize, inner sep=0},
        tick label style={font=\scriptsize},
        title style={font=\footnotesize, yshift=-1.5ex},
        legend style={
            nodes={scale=0.6, transform shape},
        },
                        label style={inner sep=0pt},         tick label style={inner sep=2pt},     }
}

\pgfplotsset{
    mylinestyle/.style={
        mark=none,
        thick,
                draw=black,
        opacity=1.,    },
    myshadestyle/.style={
                        opacity=0.2,
    },
    sPHNNstyle/.style           ={mylinestyle, CPSred, very thick},
    sPHNNshadestyle/.style      ={myshadestyle, CPSred},
    sPHNNLMstyle/.style         ={mylinestyle, densely dashdotted, CPSorange},
    sPHNNLMshadestyle/.style    ={myshadestyle, CPSorange},
    cPHNNstyle/.style           ={mylinestyle, densely dashdotdotted, CPSdarkblue},
    cPHNNshadestyle/.style      ={myshadestyle, CPSdarkblue},
    PHNNstyle/.style            ={mylinestyle, densely dotted, CPSlightblue},
    PHNNshadestyle/.style       ={myshadestyle, CPSlightblue},
    sNODEstyle/.style           ={mylinestyle, semithick, CPSgrey},
    sNODEshadestyle/.style      ={myshadestyle, CPSgrey},
    NODEstyle/.style            ={mylinestyle, CPSgreen, semithick},     NODEshadestyle/.style       ={myshadestyle, CPSgreen},
        excitationstyle/.style      ={mylinestyle, CPSgrey!50!black, dashed},
    groundtuthstyle/.style      ={mylinestyle, black, dashed, ultra thick}, }

\title{Stable Port-Hamiltonian Neural Networks}

\author{%
                                          Fabian J.~Roth, Dominik K.~Klein, Maximilian Kannapinn, Jan Peters, Oliver Weeger\\
  Technical University of Darmstadt, Darmstadt, Germany\\ 
  \texttt{\{\href{mailto:roth@cps.tu-darmstadt.de}{roth},\href{mailto:klein@cps.tu-darmstadt.de}{klein},\href{mailto:kannapinn@cps.tu-darmstadt.de}{kannapinn},\href{mailto:weeger@cps.tu-darmstadt.de}{weeger}\}@cps.tu-darmstadt.de}, \\ \texttt{\href{mailto:peters@ias.tu-darmstadt.de}{peters@ias.tu-darmstadt.de}}
}

\begin{document}

\maketitle

\begin{abstract}
    In recent years, nonlinear dynamic system identification using artificial neural networks has garnered attention due to its broad potential applications across science and engineering. 
However, purely data-driven approaches often struggle with extrapolation and may yield physically implausible forecasts. 
Furthermore, the learned dynamics can exhibit instabilities, making it difficult to apply such models safely and robustly.
This article introduces \glsxtrlongpl{sPHNN}, a machine learning architecture that incorporates physical biases of energy conservation and dissipation while ensuring global Lyapunov stability of the learned dynamics. 
Through illustrative and real-world examples, we demonstrate that these strong inductive biases facilitate robust learning of stable dynamics from sparse data, while avoiding instability and surpassing purely data-driven approaches in accuracy and physically meaningful generalization.
Furthermore, the model's applicability and potential for data-driven surrogate modeling are showcased on multiphysics simulation data.

\end{abstract}

\section{Introduction}\label{sec:introduction}

Recent years have seen a strongly growing interest in using \gls{ML} to identify the dynamics of physical systems directly from observations \cite{legaard2023,lee2021,lai2021,boettcher2023,park2024,wang2021,ramasinghe2023}. 
This approach has diverse applications, ranging from modeling phenomena with partially or entirely unknown underlying physics to creating surrogate and reduced-order models for rapid simulations and digital twins \cite{willard2022, wang2021, kannapinn2024}. In the latter application, data-driven methods can address the drawbacks of traditional numerical methods, which often require tremendous computational resources and expertise~\cite{wang2021,abbasi2024}. However, in turn, classical \gls{ML} algorithms typically require large amounts of training data and lack robustness, as they often suffer from poor extrapolation and are prone to making physically implausible forecasts \cite{abbasi2024,wang2021,willard2022,liu2019a}.
To overcome these limitations, a general consensus is to avoid discarding centuries' worth of established knowledge in physics, which is an implicit consequence of relying solely on data-driven methods. Instead, there is a growing emphasis on incorporating physical priors and inductive biases into \gls{ML} approaches \cite{willard2022,liu2019a}. 
This \emph{physics-guided} \gls{ML} promises improved accuracy, robustness, and interpretability while reducing data requirements \cite{wang2021,liu2019a}. 

Several physics-guided architectures have been proposed for learning dynamical systems. These approaches often parameterize an \gls{ODE} to identify continuous-time dynamics. 
To ensure adherence to the laws of thermodynamics, many architectures build upon frameworks such as Hamiltonian mechanics \cite{greydanus2019,jin2020,choudhary2021}, Lagrangian mechanics \cite{lutter2019,cranmer2020}, Poisson systems \cite{jin2023}, \glsxtrshort{GENERIC} \cite{hernandez2021,zhang2022}, \glspl{PHS} \cite{zhong2020,desai2021,eidnes2023,neary2023,nakano2022}, or the generalized Onsager principle \cite{yu2021}. 
The latter three allow for the modeling of dissipative dynamics, while the former are generally limited to energy-conserving systems. 
Additionally, \glspl{PHS} explicitly incorporate external excitations, making them particularly suited for modeling controlled systems.

The mentioned physical modeling frameworks have implications for the stability of the learned dynamics.
For example, in Hamiltonian neural networks, stable equilibria correspond to extrema in the Hamiltonian energy function. 
However, these effects are often not discussed in the referenced works.
Nevertheless, stability is an important and fundamental property of any dynamic system. 
Guarantees on the stability of the dynamics may improve the model's robustness, help trust their predictions, and can themselves be viewed as physically motivated biases~\cite{erichson2019}.

Given the importance of the stability property, multiple other works have proposed methods to enhance the stability of learned dynamics, often using \NODEs \cite{chen2018} as a basis.
\citet{massaroli2020} propose a provably stable \NODE variant that guarantees the existence of locally asymptotically stable equilibria. 
Multiple studies have combined Lyapunov's theory with \glspl{NN} to guarantee the stability of discrete-time dynamics \cite{lawrence2021,erichson2019} or to estimate the region of attraction for given dynamical systems \cite{richards2018,barreau2024}. 
\textcite{kolter2019} describe another Lyapunov-stable architecture based on \NODEs. 
They concurrently learn unconstrained dynamics with a \NODE and a convex Lyapunov function. Global stability around an equilibrium point is then ensured by projecting the \NODE-dynamics onto the space of stable dynamics as given by the Lyapunov function. 
\citet{takeishi2021} extend this approach from a stable equilibrium to stable invariant sets. Similar projection-based techniques have also been proposed to learn input-output stable dynamics \cite{kojima2022} and dissipative systems \cite{okamoto2024}. 
However, these projection methods can introduce discontinuities, hindering training strategies that rely on integrating trajectories rather than matching state-derivative pairs \cite{schlaginhaufen2021}.
Other approaches employ physical modeling frameworks to formulate the learning problem directly in the subspace of stable dynamics.
\citet{rettberg2024} propose identifying a linear latent \gls{PHS}, which makes enforcing global stability relatively straightforward. 
\citet{yu2021} propose a model based on the generalized Onsager principle that can guarantee bounded solutions and the existence of locally asymptotically stable equilibria. While their evolution equation is similar to \gls{PHS}, their approach does not consider arbitrary, time-dependent external input signals.

This work exploits the connections between stability and physical frameworks and proposes \mymodel for learning \emph{nonlinear} and \emph{globally stable} dynamical systems. 
The method is projection-free and can incorporate input signals. 
\revision{
    We show that the approach is capable of learning equilibrium points, though our numerical evaluations focus on practical applications where the equilibrium is known a priori. In such cases, this prior knowledge can be seamlessly integrated into the proposed architecture, improving prediction accuracy while simultaneously reducing data requirements.
}
\section{Background}\label{sec:background}

\subsection{Stability of dynamical systems}\label{sec:stability}

We consider autonomous dynamical systems as first-order \glspl{ODE}, where the state $\bsx(t)\in\bbR^n$ evolves according to a vector-valued function $\bsf:\bbR^n\rightarrow\bbR^n$:
\begin{equation}\label{eq:autonomous_system}
    \dot\bsx(t)=\bsf(\bsx(t)).
\end{equation}
Suppose $\bsf$ is sufficiently smooth to ensure a unique solution $\bsx(t)$ for any initial condition $\bsx(t_0)$. A solution $\hat\bsx(t)$ is \emph{stable} in the sense of Lyapunov if, for any small perturbation in the initial condition, the perturbed solution $\Bar\bsx(t)$ remains close for all times, that is, if for any $\epsilon>0$, there exists a $\delta>0$ such that $\norm{\hat\bsx(t_0) - \Bar{\bsx}(t_0)} < \delta$ implies  $\norm{\hat\bsx(t) - \Bar\bsx(t)} < \epsilon$ for all $t>t_0$ \cite{malkin1959, verhulst1990}.
A stronger version of stability, namely \emph{asymptotic stability}, is obtained if additionally the perturbed solution $\bar\bsx(t)$ converges to $\hat\bsx(t)$ for $t\rightarrow\infty$ for initial conditions $\norm{\hat\bsx(t_0) - \Bar{\bsx}(t_0)} < \Delta$ with some $\Delta>0$ \cite{verhulst1990}.

Lyapunov stability is a \emph{local} property, in the sense that for sufficiently large perturbations, the perturbed solution may stray arbitrarily far even from asymptotically stable solutions \cite{seydel2010}. 
Quantifying the permissible extent of perturbations on the initial condition such that the perturbed solution still converges to the unperturbed solution leads to the concept of a \emph{basin of attraction} \cite{layek2015}.
Asymptotically stable equilibria, for which the basin of attraction extends to the entire phase space, are called \emph{globally asymptotically stable} \cite{layek2015}. A system with a globally stable equilibrium cannot have any other equilibria, as their existence would imply that there are solutions (namely, the other equilibria) that do not converge to the globally stable equilibrium.

In the following, we consider the case of an equilibrium solution $\bsx(t)=\bszero$, i.e., $\bsf(\bszero)=\bszero$. Placing the equilibrium at the origin is done for ease of notation and is without loss of generality, as arbitrary equilibria can be translated to the origin with a time-independent transformation. 
The stability of the equilibrium can be \revision{shown} through the use of Lyapunov's stability theory by finding a suitable scalar Lyapunov function $V$ \cite{verhulst1990, khalil2002}. Let $V: D\rightarrow\bbR$ be continuously differentiable and positive definite, that is, $V(\bszero)=0$ and $ V(\bsx) > 0$ in $D\setminus\{\bszero\}$, where $D\subseteq\bbR^n$ is a neighborhood of the origin. If the value of $V$ is nonincreasing along trajectories of the system in $D$, then the equilibrium is stable in the sense of Lyapunov. Mathematically, this is the case if $\dot V(\bsx)$ is negative semi-definite:
\begin{equation}\label{eq:decrease_condition}
    \dot V(\bsx)=\partdiff{V}{\bsx}^{\intercal}\dot\bsx = \partdiff{V}{\bsx}^{\intercal}\bsf(\bsx) \leq0\quad\forall\bsx\in D\setminus\{\bszero\}.
\end{equation}
In the case of a negative definiteness ($\dot V(\bsx)<0$ in $D\setminus\{\bszero\}$), \emph{asymptotic} stability can be concluded.
To obtain \emph{global} asymptotic stability, these conditions must be satisfied \emph{globally}, that is, for all ${\bsx\in D=\bbR^n}$, \emph{and} the Lyapunov function must be radially unbounded, i.e., $V(\bsx)\rightarrow\infty$ for $\norm{\bsx}\rightarrow\infty$ \cite{khalil2002}.
The latter condition avoids the existence of solutions that do not converge to the equilibrium by diverging to infinity without violating the decrease condition in \cref{eq:decrease_condition}.

\subsection{Port-Hamiltonian systems}\label{sec:phs}

\Glspl{PHS} theory provides a framework for modeling physical systems that generalizes the underlying mathematical structure of Hamiltonian dynamics. It incorporates the modeling of energy-dissipating elements, which are typically absent in classical Hamiltonian systems \cite{vanderschaft2014}. Additionally, \glspl{PHS} theory has roots in control theory, emphasizing the interactions of dynamic systems with their environment.
In this work, we consider \glspl{PHS} of the form
\begin{equation}\label{eq:isphs_evolution}
    \dot{\bsx} = \left[\bsJ(\bsx) - \bsR(\bsx)\right]\partdiff{\hamiltonian}{\bsx}(\bsx) + \bsG(\bsx)\bsu(t).
\end{equation}
Here, $\bsx(t)\in\bbR^n$ describes the system's state, $\bsu(t)\in\bbR^m$ its input, and $\hamiltonian:\bbR^n\rightarrow\bbR$ denotes the Hamiltonian, which typically represents the total energy of the system \cite{vanderschaft2014, beattie2018}.
The skew-symmetric \emph{structure matrix} ${\bsJ:\bbR^n\rightarrow\bbR^{n\times n}},\,{\bsJ(\bsx)=-\bsJ^{\intercal}(\bsx)}$ describes the conservative energy flux within the system. 
The symmetric positive semi-definite matrix ${\bsR:\bbR^n\rightarrow\bbR^{n\times n}},\,{\bsR(\bsx)=\bsR^{\intercal}(\bsx)\succeq0}$ is called \emph{dissipation matrix} and describes the energy losses. Finally, the \emph{input matrix} ${\bsG:\bbR^n\rightarrow\bbR^{n\times m}}$ describes how energy enters and exits the system via the inputs.
\Glspl{PHS} extend the energy conservation of Hamiltonian systems to the energy dissipation inequality:
\begin{equation}\label{eq:isphs_energy_balance}\begin{split}
    \dot\hamiltonian &=     \underbrace{\partdiff{\hamiltonian}{\bsx}^{\intercal}\mkern-8mu\bsJ\partdiff{\hamiltonian}{\bsx}}_{\substack{\vphantom{\geq}=0}} - \underbrace{\partdiff{\hamiltonian}{\bsx}^{\intercal}\mkern-8mu\bsR\partdiff{\hamiltonian}{\bsx}}_{\substack{\geq0}} + \underbrace{\partdiff{\hamiltonian}{\bsx}^{\intercal}\mkern-8mu\bsG\bsu(t)}_{\substack{s(\bsx, \bsu)}}
        \leq s(\bsx,\bsu).
\end{split}
\end{equation}
In the language of system theory, the system is dissipative
with respect to the supply rate $s(\bsx,\bsu)$. 
The formal definition of dissipativity requires $\hamiltonian\geq0$, which will be ensured later.
Since the supply rate $s(\bsx,\bsu)$ is linear in the input $\bsu$, it vanishes for zero-input $s(\bsx,\bszero)=0$. The inequality thus ensures that the system cannot gain energy in the absence of an external excitation ($\bsu=\bszero$). Besides this physical motivation, the dissipativity of \glspl{PHS} plays a crucial role in stability analysis. 
For the unforced system with ${s=0}$, the Hamiltonian inherently fulfills the decrease condition $\dot\hamiltonian\leq0$.
Thus, by using $\hamiltonian$ as a Lyapunov function, it can be shown that any strict minimum in the Hamiltonian implies a stable equilibrium of the unforced dynamics \cite{vanderschaft2014}.  
Furthermore, the equilibrium is asymptotically stable if the system continuously dissipates energy ($\bsR\succ0$) for states near the minimum.

\revision{
The use of the port-Hamiltonian formulation as a basis for an \gls{ML} model has multiple favorable consequences. 
When learning system dynamics from data, the structure of \cref{eq:isphs_evolution} provides both a physical bias and the opportunity for interpreting the resulting trained model, while  enabling the modeling of a wide range of relevant systems. 
\cref{eq:isphs_energy_balance} highlights the distinctive roles played by the three matrix-valued functions. The structure matrix $\bsJ(\bsx)$ results in conservative dynamics, the dissipation matrix $\bsR(\bsx)$ contributes only to the dissipative dynamics, and the input matrix $\bsG(\bsx)$ captures the energy flow over the system boundaries due to the inputs. This knowledge improves the interpretability of the trained model components. Furthermore, it allows new ways of incorporating prior knowledge into the system. 
For example, the learned system can be constrained to be conservative by fixing $\bsR = \bszero$.
Furthermore, the connection of \gls{PHS} to the decrease condition from Lyapunov's theory provides the opportunity to introduce further desirable stability constraints.
}

\revision{
To illustrate the above concepts with a simple example, consider the dynamics of a three-dimensional spinning rigid body described by Euler's rotation equations.
Expressing the angular velocity vector $\bsomega(t)\in\bbR^3$ in the principal-axis coordinate frame, the evolution equation is given by
\begin{equation}\label{eq:euler_equations}
    \bsI\dot\bsomega + \bsomega\times(\bsI\bsomega) = -\mu\bsomega \quad\forall t>0.
\end{equation}
Here, $\bsI=\diag(I_1, I_2, I_3)$ denotes the inertia matrix and $\mu$ is a damping coefficient.
By introducing
\begin{equation}   \label{eq:rigid_hamiltonian} 
\hamiltonian(\bsomega)=\frac{1}{2}I_1\omega_1^2+\frac{1}{2}I_2\omega_2^2+\frac{1}{2}I_3\omega_3^2,
\end{equation}
which describes the kinetic energy $E$, the system can be rewritten in the form of \cref{eq:isphs_evolution}, with state-dependent \structurematrix $\bsJ$, constant dissipation matrix $\bsR$, and vanishing input matrix $\bsG$:
\begin{equation}
    \begin{bmatrix}
        \dot\omega_1 \\ \dot\omega_2 \\ \dot\omega_3
    \end{bmatrix}
    =
    \Bigg(
    \underbrace{
        \begin{bmatrix}
            0 & -\frac{I_3}{I_1I_2}\omega_3 & \frac{I_2}{I_3I_1}\omega_2 \\
            \frac{I_3}{I_1I_2}\omega_3 & 0 & -\frac{I_1}{I_2I_3}\omega_1 \\
            -\frac{I_2}{I_3I_1}\omega_2 & \frac{I_1}{I_2I_3}\omega_1 & 0
        \end{bmatrix}
    }_{\bsJ(\bsomega)}
    -
    \underbrace{\mu
        \begin{bmatrix}
            \frac{1}{I_1^2} & 0\vphantom{-\frac{I_3}{I_1I_2}\omega_3} & 0 \\
            0 & \frac{1}{I_2^2} & 0\vphantom{-\frac{I_3}{I_1I_2}\omega_3} \\
            0\vphantom{-\frac{I_3}{I_1I_2}\omega_3} & 0 & \frac{1}{I_3^2} \\
        \end{bmatrix}
    }_{\bsR}
    \Bigg)
    \partdiff{\hamiltonian}{\bsomega}.
\end{equation}
In anticipation of the numerical experiment in \cref{sec:spinning_rigid_body}, \cref{fig:spinning_body_energy} shows the predicted energy $E$ from various models trained on trajectories from the spinning body system. 
Those incorporating both, physical biases from the \gls{PHS}-formulation and stability (bPHNN, sPHNN, and sPHNN-LM) perform best, highlighting the importance of these constraints for learning system dynamics.
}

\begin{figure}[t]
    \centering
    \vspace*{-2mm}
    \inputTikzWithExternalization{spinning_body_energies}{tikz/spinning_rigid_body/energies.tex}
        \caption{\emph{Spinning rigid body}: \revision{Interquartile mean (lines) and range (shaded regions) of the energy $E$ 
        computed from the predicted states. Left: Models without stability bias; Right: Models with stability bias.}}
    \label{fig:spinning_body_energy}
    \vspace{-0.3cm}
\end{figure}
\section{Stable port-Hamiltonian neural networks}\label{sec:sPHNN}

Our goal is to learn the dynamics of physical systems in the form of \cref{eq:isphs_evolution} with desirable stability properties from observations. 
To this end, we parameterize the components $\hamiltonian$, $\bsJ$, $\bsR$ and $\bsG$ of \gls{PHS} with \glspl{NN}. 

\subsection{Stability of port-Hamiltonian dynamics}

To guarantee the existence of stable equilibria of port-Hamiltonian systems, it is sufficient to enforce the existence of strict local minima in the Hamiltonian. However, these stability implications are only \emph{local}, meaning the size and shape of the corresponding basin of attraction are not constrained by \cref{eq:isphs_evolution}. Small perturbations might be enough to stray far from the stable equilibria. 
Furthermore, the general \gls{PHS} can represent dynamics such as unbounded trajectories, which must be considered unstable from a practical standpoint. 
The following describes the necessary requirements for achieving \emph{global} stability. 
Essentially, the Hamiltonian $\hamiltonian$ is required to be a suitable Lyapunov function, which includes positive definiteness and radial unboundedness.
These properties are achieved by constraining the Hamiltonian to be convex and ensuring the existence of a strict minimum. This approach is similar to the construction of a Lyapunov function by \citet{kolter2019}, but differs in the way the minimum is enforced.
\begin{theorem}\label{prop:stability_requirements}
    Consider the \gls{PHS} \cref{eq:isphs_evolution} in the unforced case $\bsu(t)=\bszero$:
    \begin{equation}\label{eq:isphs_auto_evolution}
        \dot{\bsx} = \left[\bsJ(\bsx) - \bsR(\bsx)\right]\partdiff{\hamiltonian}{\bsx}(\bsx), \quad \text{with}\quad\bsJ=-\bsJ^{\intercal},\,\bsR=\bsR^{\intercal}\succeq0.
    \end{equation}
    Suppose the Hamiltonian $\hamiltonian(\bsx)$ is convex, twice continuously differentiable, and fulfills:
    \begin{align}\label{eq:prop_convex_requirements}
        \hamiltonian(\bszero)=0,&&
        \partdiff{\hamiltonian}{\bsx}\bigg\rvert_{\bsx=\bszero}=\bszero,&&
        \hessian{\hamiltonian}{\bsx}\bigg\rvert_{\bsx=\bszero}\succ0.
    \end{align}
        \revision{
    Then, $\hamiltonian$ is a suitable Lyapunov function for showing stability of the equilibrium at $\bsx(t)=\bszero$, and all solutions are bounded.
    }
    Furthermore, the equilibrium is globally asymptotically stable if $\bsR(\bsx)\succ0$.
\end{theorem}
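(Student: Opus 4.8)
The plan is to verify that $\hamiltonian$ satisfies all the hypotheses of the Lyapunov theory recalled in \cref{sec:stability}: positive definiteness, radial unboundedness, and the decrease condition $\dot\hamiltonian\le0$ (strict away from the origin in the asymptotic case). The decrease condition is the easy part and follows directly from the \gls{PHS} structure, exactly as in the energy balance \cref{eq:isphs_energy_balance}: along trajectories of \cref{eq:isphs_auto_evolution},
\begin{equation*}
    \dot\hamiltonian = \partdiff{\hamiltonian}{\bsx}^{\intercal}[\bsJ(\bsx)-\bsR(\bsx)]\partdiff{\hamiltonian}{\bsx} = -\partdiff{\hamiltonian}{\bsx}^{\intercal}\bsR(\bsx)\partdiff{\hamiltonian}{\bsx}\le0,
\end{equation*}
since the skew-symmetry of $\bsJ$ annihilates the conservative term and $\bsR\succeq0$ renders the remaining term nonpositive. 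The real work lies in showing that the three local conditions in \cref{eq:prop_convex_requirements}, together with global convexity, promote $\hamiltonian$ to a globally valid Lyapunov function.

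For positive definiteness, I would first note that a convex, differentiable function whose gradient vanishes at the origin attains its global minimum there, so $\hamiltonian(\bsx)\ge\hamiltonian(\bszero)=0$ everywhere. To exclude a flat valley, suppose $\hamiltonian(\bsx_0)=0$ for some $\bsx_0\ne\bszero$; convexity along the segment $t\bsx_0$, $t\in[0,1]$, forces $\hamiltonian(t\bsx_0)\le t\hamiltonian(\bsx_0)=0$, hence $\hamiltonian\equiv0$ on that segment, contradicting the strict local minimum guaranteed by the positive-definite Hessian at the origin. Thus $\hamiltonian(\bsx)>0$ for $\bsx\ne\bszero$.

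For radial unboundedness, I would use a ray-scaling argument that exploits convexity and the positivity just established. On a sphere $\norm{\bsx}=r$ the continuous function $\hamiltonian$ attains a minimum $m>0$ by compactness. For any $\bsx$ with $\norm{\bsx}=R>r$, the point $\bsy=(r/R)\bsx$ lies on that sphere, and convexity with $\hamiltonian(\bszero)=0$ gives $\hamiltonian(\bsy)\le(r/R)\hamiltonian(\bsx)$, so $\hamiltonian(\bsx)\ge(m/r)\norm{\bsx}\to\infty$. With positive definiteness, radial unboundedness, and the decrease condition in hand, the Lyapunov theorem yields stability; moreover every sublevel set $\{\hamiltonian\le c\}$ is bounded, and since $\hamiltonian$ is nonincreasing along trajectories, each solution remains in such a set and is therefore bounded (and, being trapped in a compact set, globally defined in forward time).

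Finally, for the globally asymptotic case I would strengthen the decrease estimate: if $\bsR(\bsx)\succ0$, then $\dot\hamiltonian<0$ whenever $\partdiff{\hamiltonian}{\bsx}\ne\bszero$. Because $\hamiltonian$ is convex with the origin as its unique minimizer (by positive definiteness), its gradient vanishes only at $\bsx=\bszero$, so $\dot\hamiltonian$ is negative definite and the Barbashin--Krasovskii theorem delivers global asymptotic stability. I expect the main obstacle to be the two global upgrades of the convex Hamiltonian — establishing strict positivity away from the origin from only the local (Hessian) condition, and deriving radial unboundedness — since these are precisely where convexity must be leveraged carefully; by comparison, the dissipation-based decrease condition and the claim that the gradient vanishes only at the origin are routine.
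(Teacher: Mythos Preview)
Your proposal is correct and follows essentially the same route as the paper: the decrease condition from the \gls{PHS} energy balance, global positive definiteness of $\hamiltonian$ from convexity together with the strict local minimum at the origin, a ray-scaling/lower-bound argument $\hamiltonian(\bsx)\ge (m/r)\norm{\bsx}$ for radial unboundedness, boundedness of trajectories via sublevel sets, and the observation that the gradient of a convex $\hamiltonian$ with a unique minimizer vanishes only at $\bszero$ to obtain strict negativity of $\dot\hamiltonian$ when $\bsR\succ0$. The only cosmetic differences are ordering and style---you establish global positivity first via a segment argument and then radial unboundedness directly, whereas the paper constructs the linear lower bound by contradiction and uses it to conclude both properties---but the underlying ideas are identical.
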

A proof of \cref{prop:stability_requirements} is provided in \cref{sec:appendix_stability_proof}.
However, its implications also follow a clear physical intuition: 
If a dynamical system continuously loses energy via internal dissipation ($\bsR\succ0$), it will eventually settle into an energy minimum corresponding to an asymptotically stable equilibrium. The convexity requirement, together with \Cref{eq:prop_convex_requirements}, guarantees the existence of a unique strict and global minimum at the origin. Additionally, it ensures that the energy for states at infinity is unbounded, i.e., $\hamiltonian(\bsx)\to\infty$ as $\norm{\bsx}\rightarrow\infty$. Thus, no finite state has more energy than a state at infinity, and all trajectories must be bounded. As a result, all solutions eventually converge to the global energy minimum. 
While convexity is sufficient to ensure global asymptotic stability, it is not a necessary condition. To enhance model expressiveness when needed, techniques such as input warping \cite{kolter2019} can be employed, but were not required in the applications considered in this work.

For conservative systems with $\bsR=\bszero$, energy is not dissipated, and attractive equilibria cannot exist. 
Therefore, a conservative system cannot possess asymptotically stable equilibria. 
Nevertheless, the minimum in the energy still represents a stable equilibrium, and all trajectories stay bounded.
The mixed case, where $\bsR\succeq0$, only allows the same conclusions as the conservative case. 
However, unless the dissipation matrix vanishes in a large region in phase space, it is unlikely that a system with $\bsR\succeq0$ is not globally asymptotically stable.

\subsection{Neural network model architecture}

Having established the theoretical requirements for stability of \gls{PHS}, we now outline how these conditions can be met with a dedicated \gls{NN} model. We call the resulting architecture \mymodel. Its computation graph is illustrated in \cref{fig:sPHNN_architecture}. \cref{fig:random_network} shows the decomposition of a \mymodel with randomly initialized \glspl{NN} into conservative and dissipative dynamics, highlighting the model's physical bias and inherent interpretability.

\begin{figure}[b]
    \centering
    \begin{subfigure}[t]{0.58\textwidth}
        \centering
        \resizebox{\columnwidth}{!}{%
            \inputTikzWithExternalization{computation_graph}{tikz/computation_graph.tex}
        }
    \caption{Computation graph.}
    \label{fig:sPHNN_architecture}
    \end{subfigure}
    \hfill
    \begin{subfigure}[t]{0.4\columnwidth}
        \centering
        \inputTikzWithExternalization{dynamics_decomposition}{tikz/interpretability/dynamics_decomposition.tex}
        \caption{Composition of the total dynamics.}
        \label{fig:random_network}
    \end{subfigure}
    \caption{\emph{sPHNN model architecture}: \subref{fig:sPHNN_architecture} Computation graph of \mymodel. The {FICNN} parameterizing $f$ is normalized to obtain $\hamiltonian$ and the outputs of the \glsxtrshortpl{FFNN} for $\bsJ$, $\bsL$, and $\bsG$ are reshaped to be skew-symmetric, lower triangular, and rectangular matrices, respectively. \subref{fig:random_network} Dynamics of a randomly initialized \mymodel for $n=2$, showing the built-in interpretability through the separation into conservative and dissipative dynamics.}
    \end{figure}

\textbf{Hamiltonian}\quad
To fulfill the constraints of \cref{prop:stability_requirements}, the Hamiltonian $\hamiltonian(\bsx)$ is represented by a \gls{FICNN} \cite{amos2017}. \glspl{FICNN} can approximate any convex function \cite{chen2018a}. They are defined by the recurrence relation
\begin{equation}\label{eq:FICNN}\begin{aligned}
    \bsz_1 &= \sigma_0\left(\bsW_0\bsx+\bsb_0\right),\\
    \bsz_{i+1} &= \sigma_i\left(\bsU_i\bsz_i+\bsW_i\bsx+\bsb_i\right) ,\; i=1,2,...,k-1,\\
    f(\bsx) &= z_k,
\end{aligned}\end{equation}
with the pass-through weight matrices $\bsW_i$ that map the input $\bsx$ directly to the activation of the $k-1$ hidden layers, bias vectors $\bsb_i$, and layer outputs $\bsz_i$. 
If the weight matrices $\bsU_i$ have non-negative components, the first activation function $\sigma_0$ is convex, and all subsequent $\sigma_i$ are convex and non-decreasing, then the output $f(\bsx)$ is convex in the input~$\bsx$. 
Since the Hamiltonian should be twice continuously differentiable, the same must hold for the activation functions. One suitable choice is the softplus activation.
The conditions for stability in \cref{eq:prop_convex_requirements} are then fulfilled by normalizing the \gls{FICNN} $f$:
\begin{equation}\label{eq:hamiltonian_normalization}
    \hamiltonian(\bsx) = f(\bsx) \underbrace{- f(\bsx^*) - \partdiff{f}{\bsx}\bigg\vert_{\bsx^*}^{\intercal}(\bsx-\bsx^*)}_{f_\text{norm}(\bsx,\bsx^*)} + \underbrace{\epsilon\norm{\bsx-\bsx^*}^2\vphantom{\partdiff{f}{\bsx}\bigg\vert_{\bsx^*}}}_{f_\text{reg}(\bsx,\bsx^*)}\,.
\end{equation}
Consequently, the Hamiltonian $\hamiltonian$ and its gradient vanish at $\bsx^*$ due to the normalization term $f_\text{norm}$. Moreover, the regularization term $f_\text{reg}$ with an arbitrarily small $\epsilon>0$ ensures positive definiteness of the Hamiltonian's Hessian. 
However, in practice, $f_\text{reg}$ can be omitted since the \gls{FICNN} provides enough bias towards local strict convexity. After training, the stability guarantee can then be recovered by verifying the Hamiltonian's positive definiteness at $\bsx^*$. 
Choosing $\bsx^*=\bszero$ fulfills all requirements of \cref{prop:stability_requirements}. However, the presented approach is more general. As the assumption that the equilibrium is located at the origin is only made for ease of notation, the normalization approach in \cref{eq:hamiltonian_normalization} can position the stable equilibrium anywhere in phase space and is not restricted to the origin. 
\revision{
    If the true equilibrium position is known beforehand, fixing~$\bsx^*$ introduces an additional inductive bias. 
    While this information can often be obtained from prior physical knowledge, it may not always be available. 
    In such cases, the equilibrium can be inferred directly from data by treating~$\bsx^*$  as a trainable parameter during optimization.
}

\textbf{Structure, dissipation and input matrices}\quad
The matrix-valued functions $\bsJ(\bsx),\,\bsR(\bsx)$ and $\bsG(\bsx)$ are parameterized by \glspl{FFNN}, where the outputs are suitably reshaped into matrices. 
As $\bsJ$ is constrained to be skew-symmetric, the respective \gls{FFNN}'s output vector is mapped to the space of skew-symmetric matrices via a linear transformation. 
Similarly, the positive definiteness (or semi-definiteness) of $\bsR$ is ensured via the Cholesky factorization $\bsR=\bsL\bsL^{\intercal}$, where the reshaped \gls{FFNN} output $\bsL$ is a lower triangular matrix with positive (or non-negative) elements along the main diagonal.
For some applications, $\bsJ, \,\bsR$ and $\bsG$ are independent of $\bsx$, and thus, learning constant matrices is sufficient. 
In particular, this is the case when the conservative dynamics of the system under consideration are Hamiltonian in the given coordinates. 
Furthermore, by fixing $\bsJ$ to the symplectic matrix (and choosing $\bsR=\bsG=\bszero$), the \mymodel can be reduced to a Hamiltonian \gls{NN}.
 
\subsection{Training stable port-Hamiltonian neural networks}
There are two fundamentally different approaches for training time-continuous dynamic models such as \NODEs and \mymodels: \emph{derivative} and \emph{trajectory fitting}. 
The former directly compares the predicted state derivatives $\dot\bsx$ from \cref{eq:isphs_evolution} to the true ones. 
While this approach is efficient, it requires the true time derivatives to be available as training data, which is typically not the case for real-world measurement data. Furthermore, derivative fitting is not applicable when augmented states are used \cite{dupont2019}, i.e., when $\bsx$ is padded with additional dimensions compared to the observable states to enrich the representable dynamics. 
Further, a model proficient in predicting $\dot\bsx$ does not necessarily produce accurate predictions for trajectories $\bsx(t)$, as small derivative errors can accumulate exponentially when integrating an ODE \cite{hairer2009}.
On the other hand, trajectory fitting directly compares integrated model trajectories $\bsx(t)$ with ground-truth trajectories, ensuring long-term accuracy.
Here, the optimization process necessitates either the propagation of the loss gradients through the ODE solver or the use of the adjoint sensitivity method \cite{chen2018}, which computes the gradients by solving a modified ODE. In both cases, trajectory fitting incurs significant computational costs due to gradient propagation and model integration, making trajectory fitting much slower than derivative fitting.

A distinct advantage of \mymodels is the favorable influence the stability constraint exerts on the training process when using trajectory fitting. 
\CapNODEs and similar architectures can suffer from exploding gradients, where small parameter changes lead to large loss gradients. 
This can slow the training process or prevent it from converging. 
A potential cause is the crossing between the boundaries of local basins of attraction during training, leading to significant differences in long-term behavior \cite{pascanu2013}. 
Since \mymodels have only one global basin of attraction, this source of training difficulties is eliminated. 
In our numerical experiments, we observe good convergence of the training error of \mymodels where \NODEs struggle without the use of regularization terms or gradient clipping.

\section{Experiments}\label{sec:experiments}

We evaluate the proposed model on multiple example problems, including measurement data from a real physical system and multiphysics simulation data.
The performance of \mymodels is compared to that of \NODEs, \PHNNs, and \glspl{bPHNN}. 
The latter two are \gls{PHS}-based \gls{NN} models that do not enforce or relax the convexity constraint of \mymodels, and thus do not guarantee global stability. 
In \PHNNs, the Hamiltonian is modeled by an unconstrained \gls{FFNN}, whereas in \glspl{bPHNN}, it is constructed to be positive and radially unbounded, which ensures bounded trajectories. 
\revision{The latter architecture is intentionally designed to resemble} OnsagerNet \cite{yu2021}, but with the key difference that it supports arbitrary time-dependent inputs, making it suitable for our experimental setting.
For examples that purely rely on derivative fitting, we further include \glspl{sNODE} \cite{kolter2019}. 
\revision{Using trajectory fitting, we could not achieve convergence of \glspl{sNODE}, excluding them from most experiments.}
This is likely due to difficulties caused by discontinuous dynamics stemming from the projection-based stability enforcement, as also mentioned by \cite{schlaginhaufen2021}.
\revision{Finally, we include a variant of \mymodels with learnable minima (\mymodel-LM). 
In contrast to \mymodels, which have $\bsx^*$ fixed to an a-priori known equilibrium position, \mymodel-LMs have to infer the equilibrium position from the training data and start out with a randomly initialized $\bsx^*$.}
All models are trained by minimizing the \gls{MSE} using the ADAM optimizer \cite{kingma2015}. 
Model hyperparameters \revision{and computational costs} are listed in \cref{sec:experiment_hyperparameters_appendix}.
\revision{Predictions are generated by rollout with a numerical integrator using the Runge-Kutta scheme Tsit5, with adaptive step size and a known input function $\bsu(t)$.}
Code and model weights are available on GitHub\footnote{\giturl}.

\subsection{Spinning rigid body}\label{sec:spinning_rigid_body}

We generate data of a damped spinning rigid body by numerically integrating Euler's rotation equations, \revision{which were already introduced in \cref{sec:phs}, \cref{eq:euler_equations}.}
Here, accurately capturing the dynamics of the angular velocity vector $\bsx=\bsomega$ with a \gls{PHS} requires a state-dependent \structurematrix~$\bsJ(\bsx)$.
We train ten instances per model type using derivative fitting with $(\bsomega, \dot\bsomega)$ pairs sampled from the trajectories.
\cref{fig:spinning_body_energy} shows the rotational energy $E(\bsomega)$ computed from model predictions~$\bsomega$ via \cref{eq:rigid_hamiltonian}.
The \PHNN, \gls{NODE}, and \gls{sNODE} models show deviations from ground truth, with the former two not converging to zero.
Furthermore, the \NODEs show unphysical behavior with at times increasing energy and exhibit high variance between instances. 
In contrast, \mymodels and \glspl{bPHNN} match the true energy well, highlighting the importance of providing both stability and physical biases.
\revision{
The \mymodel-LM variant recovered the actual equilibrium position accurately, with an interquartile mean distance between the true and inferred minimum across all instances of \num{0.010}. As a result, its predictive performance is nearly identical to that of the sPHNN with fixed~$\bsx^*$.
}

\subsection{Cascaded tanks}\label{sec:cascaded_tanks}

Next, the \mymodel model is evaluated using the cascaded tanks dataset \cite{schoukens2020, schoukens2016}. 
It contains measurements of a physical fluid level control system consisting of a pump and two tanks, see \cref{sec:cascaded_tanks_appendix} for details. 
The training data consists of a single trajectory with 1024 pairs of pump voltages $u$ and water level measurements $y$ of the lower tank.
We model the system using a two-dimensional state vector $\bsx=[x_1, x_2]^{\intercal}$, where $y=x_2$.  
Assuming the augmented state $x_1$ describes the water level in the upper tank, it is straightforward to determine the equilibrium of the autonomous system. With zero input, the tanks will eventually drain; therefore, the stable equilibrium is given by $\bsx^*=\bszero$. We thus fix the minimum of the \mymodel's Hamiltonian to the origin. 
To investigate the effects of a potential lack of this knowledge, we also train the \mymodel-LM variant. 
Its minimum location $\bsx^*$ is randomly initialized and optimized during training.
Apart from this, both \mymodel versions are identical and use a constant fixed symplectic matrix as the \structurematrix, as well as constant but learnable dissipation and input matrices. 
The same structure applies to PHNN and bPHNN.

\begin{figure}[t]
    \vspace*{-0.1cm}
    \centering
    \begin{subfigure}[b]{0.5\columnwidth}
        \centering
        \resizebox{\columnwidth}{!}{%
            \inputTikzWithExternalization{cascaded_tanks_boxplot_short_labels}{tikz/cascaded_tanks/boxplot_short_labels.tex}
        }
        \caption{Training and test \glsxtrshort{RMSE}.}
        \label{fig:cascaded_tanks_rmse}
    \end{subfigure}
    \begin{subfigure}[b]{0.49\columnwidth}
        \centering
        \inputTikzWithExternalization{cascaded_tanks_relaxation}{tikz/cascaded_tanks/relaxation_all_in_two.tex}
        \vspace*{-1mm}
        \caption{Relaxation.}
        \label{fig:cascaded_tanks_relaxation}
    \end{subfigure}
    \vspace*{-4mm}
    \caption{\emph{Cascaded tanks}: \subref{fig:cascaded_tanks_rmse} \Glsxtrshort{RMSE} of the models on training and test trajectory. \subref{fig:cascaded_tanks_relaxation}: Predictions for the extended test trajectory. At $t=\qty{4096}{\second}$, the pump is turned off.  Lines correspond to the interquartile mean and shaded areas represent the interquartile range of the predictions from the \num{20} model instances.}
    \label{fig:cascaded_tanks}
    \vspace*{-4mm}
\end{figure}

We train \num{20} instances per model type and show a statistical evaluation of the \glspl{RMSE} in \cref{fig:cascaded_tanks_rmse}. 
Overall, \mymodel, \mymodel-LM, and \gls{bPHNN} achieve comparable accuracy, while \NODE and \PHNN perform worse on average across both trajectories. In addition to their better performance, the models with stability bias exhibit lower variance across model instances, indicating more consistent results.
To evaluate the models' extrapolation capabilities, we extend the evaluation data with \qty{400}{s} of zero-input. The resulting predictions are depicted in \Cref{fig:cascaded_tanks_relaxation}. 
Assuming~perfectly calibrated sensors, the true system response is an eventual zero output as both tanks drain. 
By construction, this behavior is guaranteed for \mymodels. 
The \num{20} \mymodel-LM instances reliably identified the systems equilibrium to lie between \qtyrange{-0.196}{0.432}{\volt}, which aligns with physical intuition.
In contrast the \gls{bPHNN}, \NODE and \PHNN models exhibit poor physical extrapolation, predicting equilibria far from zero, diverging water levels, or oscillatory behavior.
In an additional experiment presented in \cref{sec:cascaded_tanks_appendix}, the \mymodel models are also shown to have successfully learned the hard saturation effect that occurs when a water tank overflows --- a task at which both \NODE and \PHNN fail.

\subsection{Thermal food processing surrogate}\label{sec:chicken_data}

With the applicability of \mymodels to real-world data established, this section focuses on one of the promising applications of data-driven dynamic system identification: the efficient construction of surrogate models.
Here, data from a conjugate heat transfer simulation of a convection oven coupled with a soft matter model for meats from \citet{kannapinn2022,kannapinn2024} is employed. 
The data consists of trajectories representing the temperature histories $T_A$ and $T_B$ at two probe points within the meat. These result from a predefined excitation signal $T_{\text{oven}}$, which controls the oven temperature. 
The surrogate's task is to predict the probe temperatures given the oven temperature as input. 

\begin{figure}[t]
    \centering
    \begin{subfigure}[b]{0.49\textwidth}
        \centering
        \inputTikzWithExternalization{chicken_data_boxplot}{tikz/chicken_data/rmse_boxplot.tex}
        \vspace*{1mm}
        \caption{Test \glspl{RMSE} for $n_D=\num{2}$ training trajectories.}
        \label{fig:chicken_data_rmse}
    \end{subfigure}
        \begin{subfigure}[b]{0.49\columnwidth}
        \centering
        \inputTikzWithExternalization{chicken_data_rmse_per_num_dat}{tikz/chicken_data/test_rmse_per_num_dat.tex}
        \vspace*{1mm}
        \caption{Test \glspl{RMSE} for $n_A=\num{3}$ augmented dimensions.}
        \label{fig:chicken_data_rmse_per_num_dat}
    \end{subfigure}\\
    \vspace*{1mm}
    \begin{subfigure}[t]{\columnwidth}
        \centering
        \resizebox{\linewidth}{!}{%
            \inputTikzWithExternalization{chicken_data_relaxation}{tikz/chicken_data/relaxation_dim_and_aug.tex}
        }
        \vspace*{-3mm}
        \caption{Model predictions for different combinations of augmented dimensions and training data.}
        \label{fig:chicken_data_best_predictions_relaxation}
    \end{subfigure}
    \caption{\emph{Thermal food processing surrogate}: \subref{fig:chicken_data_rmse} and \subref{fig:chicken_data_rmse_per_num_dat} \glsxtrshortpl{RMSE} evaluated on \num{15} test trajectories for various numbers of augmented dimensions $n_A$ and training trajectories $n_D$. 
    \subref{fig:chicken_data_best_predictions_relaxation} Interquartile mean and range of the $T_A$ predictions for a custom test case. \emph{Top row}: Varying number of augmented dimensions with fixed number $n_D=\num{2}$ of training trajectories. \emph{Bottom row}: Varying number of training trajectories with fixed number $n_A=\num{3}$ of augmented dimensions. The time $t=\qty{1395}{\second}$ marks the length of the training trajectories.}
    \label{fig:chicken_data}
    \vspace*{-1mm}
\end{figure}

We fit the models to initially $n_D=\num{2}$ trajectories consisting of \num{280} samples each (see \cref{sec:chicken_data_appendix}). 
Per model, we vary the dimensionality of the state ${\bsx\in\bbR^n}$ with~$n=2+n_A$ by adding \numrange{0}{3} augmented dimensions $n_A$.
The minimum $\bsx^*$ of the \mymodels' Hamiltonian is set to the point in phase space corresponding to the ambient temperature, as motivated by physical intuition about the stable thermal equilibrium.
The initial values of the augmented states are fixed to zero.
In total, \num{20} instances are trained per model type and augmented dimension. 

The \glspl{RMSE} evaluated on 15 test trajectories (see \cref{sec:chicken_data_appendix}) are depicted in \cref{fig:chicken_data_rmse}. 
The \mymodel predictions tend to improve with more augmented dimensions, achieving errors comparable to those obtained by \citet{kannapinn2022} using commercial software. 
This appears intuitive, as additional hidden state variables may be required to accurately describe the evolution of~$T_A$ and~$T_B$ that results from the multiphysics, conjugate heat transfer process, see also \cite{dupont2019}.
However, for \glspl{bPHNN} \revision{and \mymodel-LMs} no clear trend emerges.
The \NODEs and \PHNNs even tend to deteriorate in performance with more augmented dimensions, likely due to instabilities in the learned dynamics.
This becomes apparent in \cref{fig:chicken_data_best_predictions_relaxation} (top row), which depicts the models' predictions for an extended test case. 
While the \mymodels consistently converge to temperatures close to thermal equilibrium, the predictions of \NODEs and \PHNNs tend to become unstable when one or more augmented dimensions are used.
Although all models improve with more training data (\cref{fig:chicken_data_rmse_per_num_dat}), these instabilities persist when extrapolating in time (\cref{fig:chicken_data_best_predictions_relaxation}, {bottom row} $t>\qty{1395}{\second}$).
\revision{
    In contrast, the stability constraint of the \mymodels and \mymodel-LMs allows safe use of the additional flexibility provided by a higher-dimensional state space, avoiding the pitfalls observed in the unconstrained models.
    Given sufficient training data, the \mymodel-LM variant performs on par with the \mymodel, whereas in the scarce-data regime its errors increase and are comparable to those of the \glspl{bPHNN}.
    Although the \glspl{bPHNN} also remain stable, they are consistently outperformed by the \mymodels for $n_A=\num{3}$, regardless of dataset size, and especially when only a single training trajectory is available $n_D=\num{1}$, see \cref{fig:chicken_data_rmse_per_num_dat} and \cref{fig:chicken_data_best_predictions_relaxation} (bottom left).
}

\revision{
    To assess the models' robustness to noise, we conducted an additional experiment using $n_A=\num{3}$ augmented dimensions and $n_D=\num{2}$ training trajectories, introducing zero-mean Gaussian noise for both the input $T_{\text{oven}}$ and the outputs $T_A$ and $T_B$. 
    The noise amplitudes were varied between \qty{5}{\percent} and \qty{25}{\percent} of the original signals' standard deviations. 
    \Cref{tab:chicken_data_noise_rmses} reports the average \gls{RMSE} over 20 model instances per model type, evaluated on noise-free test data.
    To isolate the effect of noise and eliminate generalization error, we also evaluated the \gls{RMSE} on the noise-free training data.
    All models demonstrate a degree of robustness to noise, since the integration step involved in trajectory fitting acts as a low-pass filter, effectively smoothing the learned dynamics. 
    As a result, high-frequency Gaussian noise is attenuated, which helps prevent overfitting and improves generalization.
    Nevertheless, the \mymodel consistently outperforms the baselines on both the training and test sets.
}

\begin{table}
    \vspace*{-2mm}
    \caption{\revision{Median \gls{RMSE} of \num{20} instances per models using $n_A=\num{3}$ augmented dimensions, trained on $n_D=\num{2}$ noisy trajectories and evaluated on noise-free training and test sets.}}
    \label{tab:chicken_data_noise_rmses}
    \vspace*{2mm}
    \centering
    \resizebox{0.9\linewidth}{!}{%
    \begin{tabular}{lrrrrrrr}
        \toprule
        & \multicolumn{3}{c}{Training} & & \multicolumn{3}{c}{Test}\\
        \cmidrule(r){2-4}
        \cmidrule(r){6-8}
        Model & \qty{5}{\percent} noise & \qty{15}{\percent} noise & \qty{25}{\percent} noise & & \qty{5}{\percent} noise & \qty{15}{\percent} noise & \qty{25}{\percent} noise\\ 
        \midrule
        sPHNN & \bf{0.332} & \bf{0.854} & \bf{1.354} &&  \bf{1.170} &  \bf{2.595} &  \bf{3.177}\\
        sPHNN-LM & 0.638 & 1.096 & 1.622 && 2.420 & 2.888 & 3.677 \\
        bPHNN & 0.500 & 1.049 & 2.207 &&  1.572 &  2.828 &  5.231\\
        PHNN  & 2.828 & 6.750 & 5.779 && 28.233 & 25.631 & 24.224\\
        NODE  & 0.993 & 1.974 & 2.423 &&  3.577 &  3.482 &  4.362\\
        \bottomrule
    \end{tabular}
    }
    \vspace*{-4mm}
\end{table}

\subsection{Additive manufacturing surrogate}\label{sec:thermal_field_data}

Finally, we evaluate the applicability of \mymodels to higher-dimensional state spaces and build a \gls{ROM} as a surrogate of a 3D \gls{PDE} problem. 
To this end, we generate field data by numerically solving the heat conduction equation for a moving heat source on a cuboid domain with convective and radiative thermal boundary conditions. 
The \glsxtrlong{FE} simulation is designed to model the evolution of the temperature field of a metal additive manufacturing process, see \cite{kannapinn2024a} for details.
In total, \num{25} trajectories are obtained by varying the heat source speed~$v$ from \qtyrange{10}{20}{\milli\metre\per\second} and power~$Q$ from \qtyrange{300}{500}{\watt}.
All trajectories span \qty{20}{\second}, with a non-zero heat source lasting for the initial \qtyrange{2.5}{6}{\second}, depending on~$v$.
However, directly using the discretized field data would lead to very high-dimensional models and render the training and inference inefficient. 
We thus apply a \gls{POD} to map the temperature and heat source fields onto \numsvdmodes-dimensional latent spaces, respectively (see \cref{sec:thermal_field_data_appendix} for details).
The models are trained on the latent representations of the two trajectories with $(v,Q)=(\qty{10}{\milli\metre\per\second}, \qty{300}{\watt})$ and $(\qty{20}{\milli\metre\per\second}, \qty{500}{\watt})$ using derivative fitting and subsequent fine-tuning with trajectory fitting.

\begin{figure}[t!]
    \vspace*{-1mm}
    \centering
    \resizebox{\linewidth}{!}{%
        \inputTikzWithExternalization{ded_mesh_predictions}{tikz/thermal_ded/mesh_predictions.tex}
    }
    \vspace*{-3mm}
    \caption{Heat source field and temperature field predictions on the cuboid domain for a test case with $v=\qty{12.5}{\milli\metre\per\second}$, $Q=\qty{400}{\watt}$. The instances selected for this evaluation resulted in the median test error for the corresponding model type. Colors are clipped to remain in the legend's range.}
    \label{fig:ded_mesh_predictions}
    \vspace*{-3mm}
\end{figure}

\Cref{fig:ded_mesh_predictions} depicts the heat source field (top row) and true and predicted temperature fields from the different model types. 
While the \mymodel's total test \gls{RMSE} peaks below \qty{30}{\kelvin} (see \cref{sec:thermal_field_data_appendix}, \cref{fig:ded_mesh_rmses_over_time}) and decreases as the system approaches thermal equilibrium, the errors for \NODE and \PHNN increase rapidly, driven by instabilities in the learned dynamics.
These instabilities are likely caused by the sparsity of the training data, which leaves gaps in the phase space, allowing unstable dynamics to emerge. 
The stability constraint of \mymodel \revision{and \mymodel-LM} mitigates these issues, ensuring robust performance even with scarce training data, while outperforming the \gls{bPHNN} thanks to their stronger inductive bias.

\section{Conclusion}\label{sec:conclusion}
\glsreset{sPHNN}
This work proposes \mymodels for accurate, robust, and reliable identification of stable nonlinear dynamics.
They leverage the port-Hamiltonian framework, which provides a basis for model interpretation and ensures adherence to the laws of thermodynamics in terms of energy conservation or dissipation balances with respect to a learned Hamiltonian energy function.
\revision{Additionally, the approach ensures global asymptotic stability of the identified dynamics, without requiring projection, by constraining the Hamiltonian to be a convex, positive definite Lyapunov function.}
While this limits \mymodels to modeling systems that appear globally stable within the region of interest, it provides a strong inductive bias that enables robust learning and reasonable generalization from sparse data. 
We demonstrate the approach's viability using data from synthetic examples, real-world measurements, and complex multiphysics simulation models. 
The resulting models, built with small neural network architectures, exhibit low variance across instances, accurate prediction, and robust generalization. Furthermore, the proposed method can successfully exploit augmented dimensions where unconstrained alternative models struggle with instabilities or data scarcity.
\revision{While the approach can identify the equilibrium point from data, we find that providing the equilibrium as a prior can further enhance prediction accuracy and reduce data requirements.}
Future work will extend the approach to multistable systems by allowing Hamiltonians with multiple minima, while retaining favorable stability properties.

\begin{ack}
The authors acknowledge the financial support provided by the Deutsche Forschungsgemeinschaft (DFG, German Research Foundation, research grant number 492770117 and the CRC/TRR 361 CREATOR), the Hessian Center for Artificial Intelligence (Hessian.AI, project EnDyRo), and the Graduate School Computational Engineering at TU Darmstadt.
Furthermore, we would like to thank An Thai Le, Daniel Palenicek, and Joe Watson from TU Darmstadt (IAS) for their constructive feedback on the manuscript.

\end{ack}

\printbibliography

@article{abbasi2024,
  title = {Physics-Informed Machine Learning for Modeling Multidimensional Dynamics},
  author = {Abbasi, Amirhassan and Kambali, Prashant N. and Shahidi, Parham and Nataraj, C.},
  date = {2024},
  journaltitle = {Nonlinear Dynamics},
  shortjournal = {Nonlinear Dyn},
  issn = {0924-090X, 1573-269X},
  doi = {10.1007/s11071-024-10163-3},
  url = {https://link.springer.com/10.1007/s11071-024-10163-3},
  urldate = {2024-09-16},
  langid = {english},
  keywords = {Hamiltonian Systems,Introduction references,Lagrangian Mechanics},
  file = {C:\Users\roth\Zotero\storage\UHFSQIJY\Abbasi et al. - 2024 - Physics-informed machine learning for modeling mul.pdf}
}

@inproceedings{amos2017,
  title = {Input Convex Neural Networks},
  booktitle = {Proceedings of the 34th International Conference on Machine Learning},
  author = {Amos, Brandon and Xu, Lei and Kolter, J. Zico},
  editor = {Precup, Doina and Teh, Yee Whye},
  date = {2017},
  series = {Proceedings of Machine Learning Research},
  volume = {70},
  pages = {146--155},
  publisher = {PMLR},
  url = {https://proceedings.mlr.press/v70/amos17b.html},
  abstract = {This paper presents the input convex neural network architecture. These are scalar-valued (potentially deep) neural networks with constraints on the network parameters such that the output of the network is a convex function of (some of) the inputs. The networks allow for efficient inference via optimization over some inputs to the network given others, and can be applied to settings including structured prediction, data imputation, reinforcement learning, and others. In this paper we lay the basic groundwork for these models, proposing methods for inference, optimization and learning, and analyze their representational power. We show that many existing neural network architectures can be made input-convex with a minor modification, and develop specialized optimization algorithms tailored to this setting. Finally, we highlight the performance of the methods on multi-label prediction, image completion, and reinforcement learning problems, where we show improvement over the existing state of the art in many cases.},
  keywords = {Convexity,Neural Networks},
  file = {C\:\\Users\\roth\\Zotero\\storage\\BMLK54VM\\Amos et al. - 2016 - Input Convex Neural Networks.pdf;C\:\\Users\\roth\\Zotero\\storage\\J2WC3EN9\\Amos et al. - 2017 - Input convex neural networks.pdf}
}

@article{barreau2024,
  title = {Learning and Verifying Maximal {{Taylor-neural Lyapunov}} Functions},
  author = {Barreau, Matthieu and Bastianello, Nicola},
  date = {2024},
  journaltitle = {arXiv e-prints},
  eprint = {2408.17246},
  eprinttype = {arXiv},
  eprintclass = {math.OC},
  doi = {10.48550/arXiv.2408.17246},
  adsnote = {Provided by the SAO/NASA Astrophysics Data System},
  keywords = {Computer Science - Machine Learning,Electrical Engineering and Systems Science - Systems and Control,Literature from CPS,Lyapunov neural network,Lyapunov Stability,Mathematics - Optimization and Control},
  file = {C\:\\Users\\roth\\Zotero\\storage\\WUGQ24NL\\Barreau and Bastianello - 2024 - Learning and Verifying Maximal Taylor-Neural Lyapu.pdf;C\:\\Users\\roth\\Zotero\\storage\\975WG8WL\\2408.html}
}

@article{beattie2018,
  title = {Linear Port-{{Hamiltonian}} Descriptor Systems},
  author = {Beattie, Christopher and Mehrmann, Volker and Xu, Hongguo and Zwart, Hans},
  date = {2018},
  journaltitle = {Mathematics of Control, Signals, and Systems},
  shortjournal = {Math. Control Signals Syst.},
  volume = {30},
  number = {4},
  pages = {17},
  issn = {0932-4194, 1435-568X},
  doi = {10.1007/s00498-018-0223-3},
  url = {http://link.springer.com/10.1007/s00498-018-0223-3},
  urldate = {2024-09-12},
  langid = {english},
  keywords = {port-Hamiltonian Systems},
  file = {C:\Users\roth\Zotero\storage\5QXTT3PV\Beattie et al. - 2018 - Linear port-Hamiltonian descriptor systems.pdf}
}

@inproceedings{boettcher2023,
  title = {Gradient-Free Training of Neural {{ODEs}} for System Identification and Control Using Ensemble {{Kalman}} Inversion},
  booktitle = {{{ICML}} Workshop on New Frontiers in Learning, Control, and Dynamical Systems},
  author = {B\"ottcher, Lucas},
  date = {2023},
  url = {https://openreview.net/forum?id=opPP65bJIN},
  keywords = {Introduction references},
  file = {C:\Users\roth\Zotero\storage\F6A3KZLT\Böttcher - 2023 - Gradient-free training of neural ODEs for system i.pdf}
}

@book{boyd2023,
  title = {Convex Optimization},
  author = {Boyd, Stephen P. and Vandenberghe, Lieven},
  date = {2023},
  edition = {Version 29},
  publisher = {Cambridge University Press},
  location = {Cambridge New York Melbourne New Delhi Singapore},
  isbn = {978-0-521-83378-3},
  langid = {english},
  pagetotal = {716},
  keywords = {Convexity,Math and Theory},
  file = {C:\Users\roth\Zotero\storage\CIS645RN\Boyd and Vandenberghe - 2023 - Convex optimization.pdf}
}

@inproceedings{chen2018,
  title = {Neural Ordinary Differential Equations},
  booktitle = {Advances in Neural Information Processing Systems},
  author = {Chen, Ricky T. Q. and Rubanova, Yulia and Bettencourt, Jesse and Duvenaud, David K},
  editor = {Bengio, S. and Wallach, H. and Larochelle, H. and Grauman, K. and Cesa-Bianchi, N. and Garnett, R.},
  date = {2018},
  volume = {31},
  publisher = {Curran Associates, Inc.},
  url = {https://proceedings.neurips.cc/paper_files/paper/2018/file/69386f6bb1dfed68692a24c8686939b9-Paper.pdf},
  keywords = {Neural ODE},
  file = {C\:\\Users\\roth\\Zotero\\storage\\3VCNJ7CU\\Chen et al. - 2018 - Neural ordinary differential equations.pdf;C\:\\Users\\roth\\Zotero\\storage\\R3BVW8NG\\1806.html}
}

@inproceedings{chen2018a,
  title = {Optimal Control via Neural Networks: A Convex Approach},
  booktitle = {International Conference on Learning Representations},
  author = {Chen, Yize and Shi, Yuanyuan and Zhang, Baosen},
  date = {2019},
  url = {https://openreview.net/forum?id=H1MW72AcK7},
  keywords = {Neural Networks},
  file = {C:\Users\roth\Zotero\storage\75IYXDQS\Chen et al. - 2019 - Optimal control via neural networks a convex appr.pdf}
}

@article{choudhary2021,
  title = {Forecasting {{Hamiltonian}} Dynamics without Canonical Coordinates},
  author = {Choudhary, Anshul and Lindner, John F. and Holliday, Elliott G. and Miller, Scott T. and Sinha, Sudeshna and Ditto, William L.},
  date = {2021},
  journaltitle = {Nonlinear Dynamics},
  volume = {103},
  number = {2},
  pages = {1553--1562},
  doi = {10.1007/s11071-020-06185-2},
  abstract = {Conventional neural networks are universal function approximators, but they may need impractically many training data to approximate nonlinear dynamics. Recently introduced Hamiltonian neural networks can efficiently learn and forecast dynamical systems that conserve energy, but they require special inputs called canonical coordinates, which may be hard to infer from data. Here, we prepend a conventional neural network to a Hamiltonian neural network and show that the combination accurately forecasts Hamiltonian dynamics from generalised noncanonical coordinates. Examples include a predator--prey competition model where the canonical coordinates are nonlinear functions of the predator and prey populations, an elastic pendulum characterised by nontrivial coupling of radial and angular motion, a double pendulum each of whose canonical momenta are intricate nonlinear combinations of angular positions and velocities, and real-world video of a compound pendulum clock.},
  keywords = {Hamiltonian Systems},
  annotation = {MAG ID: 3125070275},
  file = {C:\Users\roth\Zotero\storage\32VIEJG5\Anshul Choudhary et al. - 2021 - Forecasting Hamiltonian dynamics without canonical.pdf}
}

@inproceedings{cranmer2020,
  title = {Lagrangian Neural Networks},
  booktitle = {{{ICLR}} 2020 Workshop on Integration of Deep Neural Models and Differential Equations},
  author = {Cranmer, Miles and Greydanus, Sam and Hoyer, Stephan and Battaglia, Peter and Spergel, David and Ho, Shirley},
  date = {2019},
  url = {https://openreview.net/forum?id=iE8tFa4Nq},
  keywords = {Lagrangian Mechanics,Neural ODE},
  file = {C\:\\Users\\roth\\Zotero\\storage\\MXUH5HZF\\Cranmer et al. - 2020 - Lagrangian Neural Networks.pdf;C\:\\Users\\roth\\Zotero\\storage\\LU5CAJYE\\2003.html}
}

@article{desai2021,
  title = {Port-{{Hamiltonian}} Neural Networks for Learning Explicit Time-Dependent Dynamical Systems},
  author = {Desai, Shaan and Mattheakis, Marios and Sondak, David and Protopapas, Pavlos and Roberts, Stephen},
  date = {2021},
  journaltitle = {Physical Review E},
  shortjournal = {Phys. Rev. E},
  volume = {104},
  number = {3},
  eprint = {2107.08024},
  eprinttype = {arXiv},
  eprintclass = {nlin, physics:physics},
  issn = {2470-0045, 2470-0053},
  doi = {10.1103/PhysRevE.104.034312},
  url = {http://arxiv.org/abs/2107.08024},
  urldate = {2024-09-17},
  abstract = {Accurately learning the temporal behavior of dynamical systems requires models with well-chosen learning biases. Recent innovations embed the Hamiltonian and Lagrangian formalisms into neural networks and demonstrate a significant improvement over other approaches in predicting trajectories of physical systems. These methods generally tackle autonomous systems that depend implicitly on time or systems for which a control signal is known apriori. Despite this success, many real world dynamical systems are non-autonomous, driven by time-dependent forces and experience energy dissipation. In this study, we address the challenge of learning from such non-autonomous systems by embedding the port-Hamiltonian formalism into neural networks, a versatile framework that can capture energy dissipation and time-dependent control forces. We show that the proposed \textbackslash emph\{port-Hamiltonian neural network\} can efficiently learn the dynamics of nonlinear physical systems of practical interest and accurately recover the underlying stationary Hamiltonian, time-dependent force, and dissipative coefficient. A promising outcome of our network is its ability to learn and predict chaotic systems such as the Duffing equation, for which the trajectories are typically hard to learn.},
  keywords = {port-Hamiltonian Systems},
  file = {C\:\\Users\\roth\\Zotero\\storage\\6K5GIPPJ\\Desai et al. - 2021 - Port-Hamiltonian Neural Networks for Learning Expl.pdf;C\:\\Users\\roth\\Zotero\\storage\\8YZSM7TK\\2107.html}
}

@inproceedings{dupont2019,
  title = {Augmented Neural Odes},
  booktitle = {Advances in Neural Information Processing Systems},
  author = {Dupont, Emilien and Doucet, Arnaud and Teh, Yee Whye},
  editor = {Wallach, H. and Larochelle, H. and Beygelzimer, A. and family=Buc, given=F., prefix=dAlch\'e-, useprefix=true and Fox, E. and Garnett, R.},
  date = {2019},
  volume = {32},
  publisher = {Curran Associates, Inc.},
  url = {https://proceedings.neurips.cc/paper_files/paper/2019/file/21be9a4bd4f81549a9d1d241981cec3c-Paper.pdf},
  keywords = {Neural ODE},
  file = {C:\Users\roth\Zotero\storage\NCTAHZGR\Dupont et al. - 2019 - Augmented neural odes.pdf}
}

@article{eidnes2023,
  title = {Pseudo-{{Hamiltonian}} Neural Networks with State-Dependent External Forces},
  author = {Eidnes, S\o lve and Stasik, Alexander J. and Sterud, Camilla and B\o hn, Eivind and Riemer-S\o rensen, Signe},
  date = {2023},
  journaltitle = {Physica D: Nonlinear Phenomena},
  shortjournal = {Physica D: Nonlinear Phenomena},
  volume = {446},
  pages = {133673},
  issn = {01672789},
  doi = {10.1016/j.physd.2023.133673},
  url = {https://linkinghub.elsevier.com/retrieve/pii/S0167278923000271},
  urldate = {2024-05-10},
  langid = {english},
  keywords = {Dissipative dynamics,Dissipative Hamiltonian,Hamiltonian Systems,port-Hamiltonian Systems},
  file = {C:\Users\roth\Zotero\storage\BQSP2I7P\Eidnes et al. - 2023 - Pseudo-Hamiltonian neural networks with state-depe.pdf}
}

@inproceedings{erichson2019,
  title = {Physics-Informed Autoencoders for {{Lyapunov-stable}} Fluid Flow Prediction},
  booktitle = {Machine {{Learning}} and the {{Physical Sciences}}  ({{NeurIPS}} 2019)},
  author = {Erichson, N. Benjamin and Muehlebach, Michael and Mahoney, Michael W.},
  date = {2019},
  location = {Vancouver, Canada},
  url = {https://arxiv.org/abs/1905.10866},
  urldate = {2024-05-17},
  abstract = {In addition to providing high-profile successes in computer vision and natural language processing, neural networks also provide an emerging set of techniques for scientific problems. Such data-driven models, however, typically ignore physical insights from the scientific system under consideration. Among other things, a physics-informed model formulation should encode some degree of stability or robustness or well-conditioning (in that a small change of the input will not lead to drastic changes in the output), characteristic of the underlying scientific problem. We investigate whether it is possible to include physics-informed prior knowledge for improving the model quality (e.g., generalization performance, sensitivity to parameter tuning, or robustness in the presence of noisy data). To that extent, we focus on the stability of an equilibrium, one of the most basic properties a dynamic system can have, via the lens of Lyapunov analysis. For the prototypical problem of fluid flow prediction, we show that models preserving Lyapunov stability improve the generalization error and reduce the prediction uncertainty.},
  eventtitle = {Neural {{Information Processing Systems}}},
  keywords = {Discrete time models,Learning stable dynamics,Lyapunov Stability,ROM},
  file = {C:\Users\roth\Zotero\storage\JHY4YQ5U\Erichson et al. - 2019 - Physics-informed Autoencoders for Lyapunov-stable .pdf}
}

@inproceedings{glorot2010,
  title = {Understanding the Difficulty of Training Deep Feedforward Neural Networks},
  booktitle = {Proceedings of the Thirteenth International Conference on Artificial Intelligence and Statistics},
  author = {Glorot, Xavier and Bengio, Yoshua},
  editor = {Teh, Yee Whye and Titterington, Mike},
  date = {2010-05-13/2010-05-15},
  volume = {9},
  pages = {249--256},
  publisher = {PMLR},
  location = {Sardinia, Italy},
  url = {https://proceedings.mlr.press/v9/glorot10a.html},
  abstract = {Whereas before 2006 it appears that deep multi-layer neural networks were not successfully trained, since then several algorithms have been shown to successfully train them, with experimental results showing the superiority of deeper vs less deep architectures. All these experimental results were obtained with new initialization or training mechanisms. Our objective here is to understand better why standard gradient descent from random initialization is doing so poorly with deep neural networks, to better understand these recent relative successes and help design better algorithms in the future. We first observe the influence of the non-linear activations functions. We find that the logistic sigmoid activation is unsuited for deep networks with random initialization because of its mean value, which can drive especially the top hidden layer into saturation. Surprisingly, we find that saturated units can move out of saturation by themselves, albeit slowly, and explaining the plateaus sometimes seen when training neural networks. We find that a new non-linearity that saturates less can often be beneficial. Finally, we study how activations and gradients vary across layers and during training, with the idea that training may be more difficult when the singular values of the Jacobian associated with each layer are far from 1. Based on these considerations, we propose a new initialization scheme that brings substantially faster convergence.},
  keywords = {Code & Algorithms},
  file = {C:\Users\roth\Zotero\storage\56FQUPD6\Glorot and Bengio - 2010 - Understanding the difficulty of training deep feed.pdf}
}

@inproceedings{greydanus2019,
  title = {Hamiltonian Neural Networks},
  booktitle = {Advances in {{Neural Information Processing Systems}}},
  author = {Greydanus, Samuel and Dzamba, Misko and Yosinski, Jason},
  editor = {Wallach, H. and Larochelle, H. and Beygelzimer, A. and family=Alch\'e-Buc, given=F., prefix=d', useprefix=false and Fox, E. and Garnett, R.},
  date = {2019},
  volume = {32},
  publisher = {Curran Associates, Inc.},
  url = {https://proceedings.neurips.cc/paper_files/paper/2019/file/26cd8ecadce0d4efd6cc8a8725cbd1f8-Paper.pdf},
  abstract = {Even though neural networks enjoy widespread use, they still struggle to learn the basic laws of physics. How might we endow them with better inductive biases? In this paper, we draw inspiration from Hamiltonian mechanics to train models that learn and respect exact conservation laws in an unsupervised manner. We evaluate our models on problems where conservation of energy is important, including the two-body problem and pixel observations of a pendulum. Our model trains faster and generalizes better than a regular neural network. An interesting side effect is that our model is perfectly reversible in time.},
  keywords = {Hamiltonian Systems,Neural ODE},
  file = {C:\Users\roth\Zotero\storage\AR7XGF6D\Greydanus et al. - 2019 - Hamiltonian Neural Networks.pdf}
}

@book{hairer2009,
  title = {Solving Ordinary Differential Equations {{I}}: Nonstiff Problems},
  shorttitle = {Solving Ordinary Differential Equations {{I}}},
  author = {Hairer, E. and N\o rsett, S. P. and Wanner, Gerhard},
  date = {2009},
  series = {Springer Series in Computational Mathematics},
  edition = {2nd rev. ed},
  number = {8},
  publisher = {Springer},
  location = {Heidelberg ; London},
  isbn = {978-3-540-78862-1},
  pagetotal = {528},
  keywords = {Numerical Integration},
  annotation = {OCLC: ocn620251790},
  file = {C:\Users\roth\Zotero\storage\TAKDUXYA\Hairer et al. - 2009 - Solving ordinary differential equations I nonstif.pdf}
}

@article{hernandez2021,
  title = {Structure-Preserving Neural Networks},
  author = {Hern\'andez, Quercus and Badias, Alberto and Gonzalez, David and Chinesta, Francisco and Cueto, Elias},
  date = {2021},
  journaltitle = {Journal of Computational Physics},
  shortjournal = {Journal of Computational Physics},
  volume = {426},
  eprint = {2004.04653},
  eprinttype = {arXiv},
  eprintclass = {physics, stat},
  pages = {109950},
  issn = {00219991},
  doi = {10.1016/j.jcp.2020.109950},
  url = {http://arxiv.org/abs/2004.04653},
  urldate = {2024-04-05},
  abstract = {We develop a method to learn physical systems from data that employs feedforward neural networks and whose predictions comply with the first and second principles of thermodynamics. The method employs a minimum amount of data by enforcing the metriplectic structure of dissipative Hamiltonian systems in the form of the so-called General Equation for the Non-Equilibrium Reversible-Irreversible Coupling, GENERIC [M. Grmela and H.C Oettinger (1997). Dynamics and thermodynamics of complex fluids. I. Development of a general formalism. Phys. Rev. E. 56 (6): 6620-6632]. The method does not need to enforce any kind of balance equation, and thus no previous knowledge on the nature of the system is needed. Conservation of energy and dissipation of entropy in the prediction of previously unseen situations arise as a natural by-product of the structure of the method. Examples of the performance of the method are shown that include conservative as well as dissipative systems, discrete as well as continuous ones.},
  keywords = {GENERIC},
  file = {C\:\\Users\\roth\\Zotero\\storage\\H8Y3Q7Z9\\Hernández et al. - 2021 - Structure-preserving neural networks.pdf;C\:\\Users\\roth\\Zotero\\storage\\W2CC2ZYG\\2004.html}
}

@article{jin2020,
  title = {{{SympNets}}: Intrinsic Structure-Preserving Symplectic Networks for Identifying {{Hamiltonian}} Systems},
  author = {Jin, Pengzhan and Zhang, Zhen and Zhu, Aiqing and Tang, Yifa and Karniadakis, George Em},
  date = {2020},
  journaltitle = {Neural Networks},
  volume = {132},
  pages = {166--179},
  issn = {0893-6080},
  doi = {10.1016/j.neunet.2020.08.017},
  url = {https://www.sciencedirect.com/science/article/pii/S0893608020303063},
  abstract = {We propose new symplectic networks (SympNets) for identifying Hamiltonian systems from data based on a composition of linear, activation and gradient modules. In particular, we define two classes of SympNets: the LA-SympNets composed of linear and activation modules, and the G-SympNets composed of gradient modules. Correspondingly, we prove two new universal approximation theorems that demonstrate that SympNets can approximate arbitrary symplectic maps based on appropriate activation functions. We then perform several experiments including the pendulum, double pendulum and three-body problems to investigate the expressivity and the generalization ability of SympNets. The simulation results show that even very small size SympNets can generalize well, and are able to handle both separable and non-separable Hamiltonian systems with data points resulting from short or long time steps. In all the test cases, SympNets outperform the baseline models, and are much faster in training and prediction. We also develop an extended version of SympNets to learn the dynamics from irregularly sampled data. This extended version of SympNets can be thought of as a universal model representing the solution to an arbitrary Hamiltonian system.},
  keywords = {Deep learning,Dynamical systems,Hamiltonian Systems,Physics-informed,Symplectic integrators,Symplectic maps},
  file = {C\:\\Users\\roth\\Zotero\\storage\\4B4USRSD\\Jin et al. - 2020 - SympNets Intrinsic structure-preserving symplecti.pdf;C\:\\Users\\roth\\Zotero\\storage\\5UEPTSUL\\2001.html}
}

@article{jin2023,
  title = {Learning {{Poisson}} Systems and Trajectories of Autonomous Systems via {{Poisson}} Neural Networks},
  author = {Jin, Pengzhan and Zhang, Zhen and Kevrekidis, Ioannis G. and Karniadakis, George Em},
  date = {2023},
  journaltitle = {IEEE Transactions on Neural Networks and Learning Systems},
  shortjournal = {IEEE Trans. Neural Netw. Learning Syst.},
  volume = {34},
  number = {11},
  pages = {8271--8283},
  issn = {2162-237X, 2162-2388},
  doi = {10.1109/TNNLS.2022.3148734},
  url = {https://ieeexplore.ieee.org/document/9716789/},
  urldate = {2024-05-10},
  keywords = {Hamiltonian Systems,Poisson systems},
  file = {C:\Users\roth\Zotero\storage\TCZL4YTX\Jin et al. - 2023 - Learning Poisson Systems and Trajectories of Auton.pdf}
}

@article{kannapinn2022,
  title = {Physics-Based Digital Twins for Autonomous Thermal Food Processing: {{Efficient}}, Non-Intrusive Reduced-Order Modeling},
  shorttitle = {Physics-Based Digital Twins for Autonomous Thermal Food Processing},
  author = {Kannapinn, Maximilian and Pham, Minh Khang and Sch\"afer, Michael},
  date = {2022},
  journaltitle = {Innovative Food Science \& Emerging Technologies},
  volume = {81},
  pages = {103143},
  issn = {1466-8564},
  doi = {10.1016/j.ifset.2022.103143},
  url = {https://www.sciencedirect.com/science/article/pii/S1466856422002284},
  urldate = {2024},
  abstract = {One possible way of making thermal processing controllable is to gather real-time information on the product's current state. Often, sensory equipment cannot capture all relevant information easily or at all. Digital Twins close this gap with virtual probes in real-time simulations, synchronized with the process. This paper proposes a physics-based, data-driven Digital Twin framework for autonomous food processing. We suggest a lean Digital Twin concept that is executable at the device level, entailing minimal computational load, data storage, and sensor data requirements. This study focuses on a parsimonious experimental design for training non-intrusive reduced-order models (ROMs) of a thermal process. A correlation (R =  - 0.76) between a high standard deviation of the surface temperatures in the training data and a low root mean square error in ROM testing enables efficient selection of training data. The mean test root mean square error of the best ROM is less than 1 Kelvin (0.2\% mean average percentage error) on representative test sets. Simulation speed-ups of Sp {$\approx$} 1.8 \texttimes{} 104 allow on-device model predictive control. Industrial relevance The proposed Digital Twin framework is designed to be applicable within the industry. Typically, non-intrusive reduced-order modeling is required as soon as the modeling of the process is performed in software, where root-level access to the solver is not provided, such as commercial simulation software. The data-driven training of the reduced-order model is achieved with only one data set, as correlations are utilized to predict the training success a priori.},
  keywords = {Datasets},
  file = {C:\Users\roth\Zotero\storage\KWHLH9F5\Kannapinn et al. - 2022 - Physics-based digital twins for autonomous thermal.pdf}
}

@article{kannapinn2024,
  title = {{{TwinLab}}: A Framework for Data-Efficient Training of Non-Intrusive Reduced-Order Models for Digital Twins},
  shorttitle = {{{TwinLab}}},
  author = {Kannapinn, Maximilian and Sch\"afer, Michael and Weeger, Oliver},
  date = {2024},
  journaltitle = {Engineering Computations},
  issn = {0264-4401},
  doi = {10.1108/EC23-0855},
  url = {https://tubiblio.ulb.tu-darmstadt.de/149193/},
  urldate = {2024},
  abstract = {Purpose: Simulation-based digital twins represent an effort to provide high-accuracy real-time insights into operational physical processes. However, the computation time of many multi-physical simulation models is far from real-time. It might even exceed sensible time frames to produce sufficient data for training data-driven reduced-order models. This study presents TwinLab, a framework for data-efficient, yet accurate training of neural-ODE type reduced-order models with only two data sets. Design/methodology/approach: Correlations between test errors of reduced-order models and distinct features of corresponding training data are investigated. Having found the single best data sets for training, a second data set is sought with the help of similarity and error measures to enrich the training process effectively. Findings: Adding a suitable second training data set in the training process reduces the test error by up to 49\% compared to the best base reduced-order model trained only with one data set. Such a second training data set should at least yield a good reduced-order model on its own and exhibit higher levels of dissimilarity to the base training data set regarding the respective excitation signal. Moreover, the base reduced-order model should have elevated test errors on the second data set. The relative error of the time series ranges from 0.18\% to 0.49\%. Prediction speed-ups of up to a factor of 36,000 are observed. Originality/value: The proposed computational framework facilitates the automated, data-efficient extraction of non-intrusive reduced-order models for digital twins from existing simulation models, independent of the simulation software.},
  langid = {english},
  keywords = {Datasets},
  file = {C:\Users\roth\Zotero\storage\GAI3ZNQ8\Kannapinn et al. - 2024 - TwinLab a framework for data-efficient training o.pdf}
}

@unpublished{kannapinn2024a,
  title = {Digital Twin Inference from Multi-Physical Simulation Data of {{DED}} Additive Manufacturing Processes with Neural {{ODEs}}},
  author = {Kannapinn, Maximilian and Roth, Fabian and Weeger, Oliver},
  date = {2024},
  eprint = {2412.03295},
  eprinttype = {arXiv},
  url = {https://arxiv.org/abs/2412.03295},
  keywords = {Computational Engineering Finance and Science (cs.CE),Computational Physics (physics.comp-ph),FOS: Computer and information sciences,FOS: Physical sciences},
  file = {C:\Users\roth\Zotero\storage\F33EFR7U\Kannapinn et al. - 2024 - Digital twin inference from multi-physical simulat.pdf}
}

@book{khalil2002,
  title = {Nonlinear Systems},
  author = {Khalil, Hassan K.},
  date = {2002},
  edition = {3rd ed},
  publisher = {Prentice Hall},
  location = {Upper Saddle River, N.J},
  isbn = {978-0-13-067389-3},
  pagetotal = {750},
  keywords = {Nonlinear theories},
  file = {C:\Users\roth\Zotero\storage\75R5MK66\Khalil - 2002 - Nonlinear systems.pdf}
}

@inproceedings{kingma2015,
  title = {Adam: A Method for Stochastic Optimization},
  booktitle = {International {{Conference}} on {{Learning Representations}}},
  author = {Kingma, Diederik and Ba, Jimmy},
  date = {2015},
  url = {https://arxiv.org/abs/1412.6980},
  keywords = {Code & Algorithms},
  file = {C:\Users\roth\Zotero\storage\QQRTIAZU\Kingma and Ba - 2014 - Adam a method for stochastic optimization.pdf}
}

@inproceedings{kojima2022,
  title = {Learning Deep Input-Output Stable Dynamics},
  booktitle = {Advances in Neural Information Processing Systems},
  author = {Kojima, Ryosuke and Okamoto, Yuji},
  editor = {Koyejo, S. and Mohamed, S. and Agarwal, A. and Belgrave, D. and Cho, K. and Oh, A.},
  date = {2022},
  volume = {35},
  pages = {8187--8198},
  publisher = {Curran Associates, Inc.},
  url = {https://proceedings.neurips.cc/paper_files/paper/2022/file/364721b3d5d9da2aaa19adafd7b0f49c-Paper-Conference.pdf},
  keywords = {Continuous time models,Learning stable dynamics,Lyapunov Stability},
  file = {C:\Users\roth\Zotero\storage\9MTZI3HT\Kojima und Okamoto - 2022 - Learning deep input-output stable dynamics.pdf}
}

@inproceedings{kolter2019,
  title = {Learning Stable Deep Dynamics Models},
  booktitle = {Advances in Neural Information Processing Systems},
  author = {Kolter, J. Zico and Manek, Gaurav},
  editor = {Wallach, H. and Larochelle, H. and Beygelzimer, A. and family=Buc, given=F., prefix=dAlch\'e-, useprefix=true and Fox, E. and Garnett, R.},
  date = {2019},
  volume = {32},
  publisher = {Curran Associates, Inc.},
  url = {https://proceedings.neurips.cc/paper_files/paper/2019/file/0a4bbceda17a6253386bc9eb45240e25-Paper.pdf},
  keywords = {Continuous time models,Learning stable dynamics,Lyapunov Stability},
  file = {C:\Users\roth\Zotero\storage\J3MXQ9BS\Kolter und Manek - 2019 - Learning stable deep dynamics models.pdf}
}

@article{lai2021,
  title = {Structural Identification with Physics-Informed Neural Ordinary Differential Equations},
  author = {Lai, Zhilu and Mylonas, Charilaos and Nagarajaiah, Satish and Chatzi, Eleni},
  date = {2021-09},
  journaltitle = {Journal of Sound and Vibration},
  shortjournal = {Journal of Sound and Vibration},
  volume = {508},
  pages = {116196},
  issn = {0022460X},
  doi = {10.1016/j.jsv.2021.116196},
  url = {https://linkinghub.elsevier.com/retrieve/pii/S0022460X21002686},
  urldate = {2024-04-05},
  langid = {english},
  keywords = {Neural ODE,Physics Informed Neural ODE,Structural Mechanics},
  file = {C:\Users\roth\Zotero\storage\DFXQ4MU6\Lai et al. - 2021 - Structural identification with physics-informed ne.pdf}
}

@article{lawrence2021,
  title = {Almost Surely Stable Deep Dynamics},
  author = {Lawrence, Nathan and Loewen, Philip and Forbes, Michael and Backstrom, Johan and Gopaluni, Bhushan},
  date = {2020},
  journaltitle = {Advances in Neural Information Processing Systems},
  volume = {33},
  pages = {18942--18953},
  keywords = {Discrete time models,Learning stable dynamics},
  file = {C:\Users\roth\Zotero\storage\EU94LYSF\Lawrence et al. - 2021 - Almost Surely Stable Deep Dynamics.pdf}
}

@book{layek2015,
  title = {An Introduction to Dynamical Systems and Chaos},
  author = {Layek, G. C.},
  date = {2015},
  edition = {1st ed. 2015},
  publisher = {Springer India: Imprint: Springer},
  location = {New Delhi},
  doi = {10.1007/978-81-322-2556-0},
  abstract = {The book discusses continuous and discrete systems in systematic and sequential approaches for all aspects of nonlinear dynamics. The unique feature of the book is its mathematical theories on flow bifurcations, oscillatory solutions, symmetry analysis of nonlinear systems and chaos theory. The logically structured content and sequential orientation provide readers with a global overview of the topic. A systematic mathematical approach has been adopted, and a number of examples worked out in detail and exercises have been included. Chapters 1-8 are devoted to continuous systems, beginning with one-dimensional flows. Symmetry is an inherent character of nonlinear systems, and the Lie invariance principle and its algorithm for finding symmetries of a system are discussed in Chap. 8. Chapters 9-13 focus on discrete systems, chaos and fractals. Conjugacy relationship among maps and its properties are described with proofs. Chaos theory and its connection with fractals, Hamiltonian flows and symmetries of nonlinear systems are among the main focuses of this book. Over the past few decades, there has been an unprecedented interest and advances in nonlinear systems, chaos theory and fractals, which is reflected in undergraduate and postgraduate curricula around the world. The book is useful for courses in dynamical systems and chaos, nonlinear dynamics, et cetera, for advanced undergraduate and postgraduate students in mathematics, physics and engineering},
  isbn = {978-81-322-2556-0},
  pagetotal = {1},
  keywords = {Dynamic Systems Theory,Hamiltonian Systems,Stability Theory},
  file = {C:\Users\roth\Zotero\storage\EALC4N3U\Layek - 2015 - An Introduction to Dynamical Systems and Chaos.pdf}
}

@article{lee2021,
  title = {Parameterized Neural Ordinary Differential Equations: Applications to Computational Physics Problems},
  shorttitle = {Parameterized Neural Ordinary Differential Equations},
  author = {Lee, Kookjin and Parish, Eric J.},
  date = {2021},
  journaltitle = {Proceedings of the Royal Society A: Mathematical, Physical and Engineering Sciences},
  shortjournal = {Proc. R. Soc. A.},
  volume = {477},
  number = {2253},
  pages = {20210162},
  issn = {1364-5021, 1471-2946},
  doi = {10.1098/rspa.2021.0162},
  url = {https://royalsocietypublishing.org/doi/10.1098/rspa.2021.0162},
  urldate = {2024-10-03},
  abstract = {This work proposes an extension of neural ordinary differential equations (NODEs) by introducing an additional set of ODE input parameters to NODEs. This extension allows NODEs to learn multiple dynamics specified by the input parameter instances. Our extension is inspired by the concept of parameterized ODEs, which are widely investigated in computational science and engineering contexts, where characteristics of the governing equations vary over the input parameters. We apply the proposed parameterized NODEs (PNODEs) for learning latent dynamics of complex dynamical processes that arise in computational physics, which is an essential component for enabling rapid numerical simulations for time-critical physics applications. For this, we propose an encoder--decoder-type framework, which models latent dynamics as PNODEs. We demonstrate the effectiveness of PNODEs on benchmark problems from computational physics.},
  langid = {english},
  keywords = {Introduction references},
  file = {C:\Users\roth\Zotero\storage\WSM39LRW\Lee and Parish - 2021 - Parameterized neural ordinary differential equatio.pdf}
}

@article{legaard2023,
  title = {Constructing Neural Network Based Models for Simulating Dynamical Systems},
  author = {Legaard, Christian and Schranz, Thomas and Schweiger, Gerald and Drgo\v na, J\'an and Falay, Basak and Gomes, Cl\'audio and Iosifidis, Alexandros and Abkar, Mahdi and Larsen, Peter},
  date = {2023},
  journaltitle = {Acm Computing Surveys},
  shortjournal = {ACM Comput. Surv.},
  volume = {55},
  number = {11},
  publisher = {Association for Computing Machinery},
  location = {New York, NY, USA},
  issn = {0360-0300},
  doi = {10.1145/3567591},
  url = {https://doi.org/10.1145/3567591},
  abstract = {Dynamical systems see widespread use in natural sciences like physics, biology, and chemistry, as well as engineering disciplines such as circuit analysis, computational fluid dynamics, and control. For simple systems, the differential equations governing the dynamics can be derived by applying fundamental physical laws. However, for more complex systems, this approach becomes exceedingly difficult. Data-driven modeling is an alternative paradigm that seeks to learn an approximation of the dynamics of a system using observations of the true system. In recent years, there has been an increased interest in applying data-driven modeling techniques to solve a wide range of problems in physics and engineering. This article provides a survey of the different ways to construct models of dynamical systems using neural networks. In addition to the basic overview, we review the related literature and outline the most significant challenges from numerical simulations that this modeling paradigm must overcome. Based on the reviewed literature and identified challenges, we provide a discussion on promising research areas.},
  articleno = {236},
  issue_date = {November 2023},
  pagetotal = {34},
  keywords = {Hamiltonian Systems,Lagrangian Mechanics,Literature from CPS,Neural ODE,Neural ODEs,Overview/Survey,physics-based regularization,physics-informed neural networks},
  file = {C:\Users\roth\Zotero\storage\M8P6AQRV\Legaard et al. - 2023 - Constructing neural network based models for simul.pdf}
}

@article{liu2019a,
  title = {Multi-Fidelity Physics-Constrained Neural Network and Its Application in Materials Modeling},
  author = {Liu, Dehao and Wang, Yan},
  date = {2019},
  journaltitle = {Journal of Mechanical Design},
  volume = {141},
  number = {12},
  eprint = {https://asmedigitalcollection.asme.org/mechanicaldesign/article-pdf/141/12/121403/5873989/md\_141\_12\_121403.pdf},
  pages = {121403},
  issn = {1050-0472},
  doi = {10.1115/1.4044400},
  url = {https://doi.org/10.1115/1.4044400},
  abstract = {Training machine learning tools such as neural networks require the availability of sizable data, which can be difficult for engineering and scientific applications where experiments or simulations are expensive. In this work, a novel multi-fidelity physics-constrained neural network is proposed to reduce the required amount of training data, where physical knowledge is applied to constrain neural networks, and multi-fidelity networks are constructed to improve training efficiency. A low-cost low-fidelity physics-constrained neural network is used as the baseline model, whereas a limited amount of data from a high-fidelity physics-constrained neural network is used to train a second neural network to predict the difference between the two models. The proposed framework is demonstrated with two-dimensional heat transfer, phase transition, and dendritic growth problems, which are fundamental in materials modeling. Physics is described by partial differential equations. With the same set of training data, the prediction error of physics-constrained neural network can be one order of magnitude lower than that of the classical artificial neural network without physical constraints. The accuracy of the prediction is comparable to those from direct numerical solutions of equations.},
  keywords = {Introduction references},
  file = {C:\Users\roth\Zotero\storage\NTBZ955Q\Liu and Wang - 2019 - Multi-fidelity physics-constrained neural network .pdf}
}

@inproceedings{lutter2019,
  title = {Deep Lagrangian Networks: {{Using}} Physics as Model Prior for Deep Learning},
  booktitle = {International Conference on Learning Representations},
  author = {Lutter, Michael and Ritter, Christian and Peters, Jan},
  date = {2019},
  url = {https://openreview.net/forum?id=BklHpjCqKm},
  keywords = {After Thesis,Lagrangian Mechanics},
  file = {C:\Users\roth\Zotero\storage\K4ZXWZLX\Lutter et al. - 2019 - Deep lagrangian networks Using physics as model p.pdf}
}

@book{malkin1959,
  title = {Theorie der Stabilit\"at einer Bewegung},
  author = {Malkin, Joel G.},
  date = {1959},
  publisher = {R. Oldenbourg},
  location = {M\"unchen},
  langid = {Deutsch aus dem Russischen von W. Hahn und R. Reissig},
  pagetotal = {400},
  keywords = {Lyapunov Stability,Stability Theory}
}

@article{massaroli2020,
  title = {Stable Neural Flows},
  author = {Massaroli, Stefano and Poli, Michael and Bin, Michelangelo and Park, Jinkyoo and Yamashita, Atsushi and Asama, Hajime},
  date = {2020},
  journaltitle = {arXiv e-prints},
  eprint = {2003.08063},
  eprinttype = {arXiv},
  eprintclass = {cs.LG},
  pages = {arXiv:2003.08063},
  doi = {10.48550/arXiv.2003.08063},
  adsnote = {Provided by the SAO/NASA Astrophysics Data System},
  keywords = {'Deep Learning',Learning stable dynamics,Lyapunov Stability,port-Hamiltonian Systems},
  file = {C\:\\Users\\roth\\Zotero\\storage\\5U9N9TGQ\\Massaroli et al. - 2020 - Stable Neural Flows.pdf;C\:\\Users\\roth\\Zotero\\storage\\HI4E3PEV\\2003.html}
}

@inproceedings{nakano2022,
  title = {Model {{Estimation Ensuring Passivity}} by {{Using Port-Hamiltonian Model}} and {{Deep Learning}}},
  author = {Nakano, Hiroyasu and Ariizumi, Ryo and Asai, Toru and Azuma, Shun-ichi},
  date = {2022-09-06},
  pages = {886--891},
  publisher = {IEEE},
  location = {Kumamoto, Japan},
  doi = {10.23919/SICE56594.2022.9905855},
  url = {https://ieeexplore.ieee.org/document/9905855/},
  urldate = {2025-03-26},
  eventtitle = {61st {{Annual Conference}} of the {{Society}} of {{Instrument}} and {{Control Engineers}} ({{SICE}})},
  isbn = {978-4-907764-78-4},
  keywords = {After Thesis,Passivitiy,port-Hamiltonian Systems},
  file = {C:\Users\roth\Zotero\storage\U5DF6NLY\Nakano et al. - 2022 - Model Estimation Ensuring Passivity by Using Port-.pdf}
}

@inproceedings{neary2023,
  title = {Compositional Learning of Dynamical System Models Using Port-Hamiltonian Neural Networks},
  booktitle = {Proceedings of the 5th Annual Learning for Dynamics and Control Conference},
  author = {Neary, Cyrus and Topcu, Ufuk},
  editor = {Matni, Nikolai and Morari, Manfred and Pappas, George J.},
  date = {2023},
  volume = {211},
  pages = {679--691},
  publisher = {PMLR},
  url = {https://proceedings.mlr.press/v211/neary23a.html},
  abstract = {Many dynamical systems---from robots interacting with their surroundings to large-scale multi-physics systems---involve a number of interacting subsystems. Toward the objective of learning composite models of such systems from data, we present i) a framework for compositional neural networks, ii) algorithms to train these models, iii) a method to compose the learned models, iv) theoretical results that bound the error of the resulting composite models, and v) a method to learn the composition itself, when it is not known a priori. The end result is a modular approach to learning: neural network submodels are trained on trajectory data generated by relatively simple subsystems, and the dynamics of more complex composite systems are then predicted without requiring additional data generated by the composite systems themselves. We achieve this compositionality by representing the system of interest, as well as each of its subsystems, as a port-Hamiltonian neural network (PHNN)---a class of neural ordinary differential equations that uses the port-Hamiltonian systems formulation as inductive bias. We compose collections of PHNNs by using the system's physics-informed interconnection structure, which may be known a priori, or may itself be learned from data. We demonstrate the novel capabilities of the proposed framework through numerical examples involving interacting spring-mass-damper systems. Models of these systems, which include nonlinear energy dissipation and control inputs, are learned independently. Accurate compositions are learned using an amount of training data that is negligible in comparison with that required to train a new model from scratch. Finally, we observe that the composite PHNNs enjoy properties of port-Hamiltonian systems, such as cyclo-passivity---a property that is useful for control purposes.},
  keywords = {After Thesis,Backlog,interconnected PHS,port-Hamiltonian Systems},
  file = {C:\Users\roth\Zotero\storage\A2KHF6WW\Neary and Topcu - 2023 - Compositional learning of dynamical system models .pdf}
}

@inproceedings{okamoto2024,
  title = {Learning Deep Dissipative Dynamics},
  booktitle = {Proceedings of the {{AAAI}} Conference on Artificial {{Intelligence}}},
  author = {Okamoto, Yuji and Kojima, Ryosuke},
  date = {2025},
  volume = {39},
  pages = {19749--19757},
  doi = {10.1609/aaai.v39i18.34175},
  keywords = {After Thesis,Dissipative dynamics,Learning stable dynamics},
  file = {C:\Users\roth\Zotero\storage\DXYRPP7M\Okamoto and Kojima - 2024 - Learning deep dissipative dynamics.pdf}
}

@article{park2024,
  title = {{{tLaSDI}}: Thermodynamics-Informed Latent Space Dynamics Identification},
  shorttitle = {{{tLaSDI}}},
  author = {Park, Jun Sur Richard and Cheung, Siu Wun and Choi, Youngsoo and Shin, Yeonjong},
  date = {2024-09},
  journaltitle = {Computer Methods in Applied Mechanics and Engineering},
  shortjournal = {Computer Methods in Applied Mechanics and Engineering},
  volume = {429},
  pages = {117144},
  issn = {00457825},
  doi = {10.1016/j.cma.2024.117144},
  url = {https://linkinghub.elsevier.com/retrieve/pii/S0045782524004006},
  urldate = {2024-07-18},
  langid = {english},
  keywords = {GENERIC,Latent Spaces,ROM},
  file = {C:\Users\roth\Zotero\storage\SEPKRE44\Park et al. - 2024 - tLaSDI Thermodynamics-informed latent space dynam.pdf}
}

@inproceedings{pascanu2013,
  title = {On the Difficulty of Training Recurrent Neural Networks},
  booktitle = {Proceedings of the 30th International Conference on Machine Learning},
  author = {Pascanu, Razvan and Mikolov, Tomas and Bengio, Yoshua},
  editor = {Dasgupta, Sanjoy and McAllester, David},
  date = {2013},
  series = {Proceedings of Machine Learning Research},
  volume = {28},
  number = {3},
  pages = {1310--1318},
  publisher = {PMLR},
  location = {Atlanta, Georgia, USA},
  url = {https://proceedings.mlr.press/v28/pascanu13.html},
  abstract = {There are two widely known issues with properly training recurrent neural networks, the vanishing and the exploding gradient problems detailed in Bengio et al. (1994). In this paper we attempt to improve the understanding of the underlying issues by exploring these problems from an analytical, a geometric and a dynamical systems perspective. Our analysis is used to justify a simple yet effective solution. We propose a gradient norm clipping strategy to deal with exploding gradients and a soft constraint for the vanishing gradients problem. We validate empirically our hypothesis and proposed solutions in the experimental section.},
  keywords = {Neural Networks},
  file = {C:\Users\roth\Zotero\storage\ZLS5HHDC\Pascanu et al. - 2013 - On the difficulty of training recurrent neural net.pdf}
}

@inproceedings{ramasinghe2023,
  title = {On the Effectiveness of Neural Priors in Modeling Dynamical Systems},
  booktitle = {{{ICML}} Workshop on New Frontiers in Learning, Control, and Dynamical Systems},
  author = {Ramasinghe, Sameera and Saratchandran, Hemanth and Shevchenko, Violetta and Lucey, Simon},
  date = {2023},
  url = {https://openreview.net/forum?id=7oe0lXarPm},
  keywords = {Introduction references},
  file = {C:\Users\roth\Zotero\storage\5I2YCJJ6\Ramasinghe et al. - 2023 - On the effectiveness of neural priors in modeling .pdf}
}

@article{rettberg2024,
  title = {Data-Driven Identification of Latent Port-{{Hamiltonian}} Systems},
  author = {Rettberg, Johannes and Kneifl, Jonas and Herb, Julius and Buchfink, Patrick and Fehr, J\"org and Haasdonk, Bernard},
  date = {2024-08-16},
  journaltitle = {arXiv e-prints},
  eprint = {2408.08185},
  eprinttype = {arXiv},
  eprintclass = {math},
  doi = {10.48550/arXiv.2408.08185},
  url = {http://arxiv.org/abs/2408.08185},
  urldate = {2025-02-14},
  abstract = {Conventional physics-based modeling techniques involve high effort, e.g., time and expert knowledge, while data-driven methods often lack interpretability, structure, and sometimes reliability. To mitigate this, we present a data-driven system identification framework that derives models in the port-Hamiltonian (pH) formulation. This formulation is suitable for multi-physical systems while guaranteeing the useful system theoretical properties of passivity and stability. Our framework combines linear and nonlinear reduction with structured, physics-motivated system identification. In this process, high-dimensional state data obtained from possibly nonlinear systems serves as input for an autoencoder, which then performs two tasks: (i) nonlinearly transforming and (ii) reducing this data onto a low-dimensional latent space. In this space, a linear pH system, that satisfies the pH properties per construction, is parameterized by the weights of a neural network. The mathematical requirements are met by defining the pH matrices through Cholesky factorizations. The neural networks that define the coordinate transformation and the pH system are identified in a joint optimization process to match the dynamics observed in the data while defining a linear pH system in the latent space. The learned, low-dimensional pH system can describe even nonlinear systems and is rapidly computable due to its small size. The method is exemplified by a parametric mass-spring-damper and a nonlinear pendulum example, as well as the high-dimensional model of a disc brake with linear thermoelastic behavior.},
  keywords = {Learning stable dynamics,port-Hamiltonian Systems},
  file = {C\:\\Users\\roth\\Zotero\\storage\\THK7QWUA\\Rettberg et al. - 2024 - Data-driven identification of latent port-Hamilton.pdf;C\:\\Users\\roth\\Zotero\\storage\\VEVU5ACL\\2408.html}
}

@inproceedings{richards2018,
  title = {The {{Lyapunov}} Neural Network: Adaptive Stability Certification for Safe Learning of Dynamical Systems},
  booktitle = {Proceedings of the 2nd Conference on Robot Learning},
  author = {Richards, Spencer M. and Berkenkamp, Felix and Krause, Andreas},
  editor = {Billard, Aude and Dragan, Anca and Peters, Jan and Morimoto, Jun},
  date = {2018},
  volume = {87},
  pages = {466--476},
  publisher = {PMLR},
  url = {https://proceedings.mlr.press/v87/richards18a.html},
  abstract = {Learning algorithms have shown considerable prowess in simulation by allowing robots to adapt to uncertain environments and improve their performance. However, such algorithms are rarely used in practice on safety-critical systems, since the learned policy typically does not yield any safety guarantees. That is, the required exploration may cause physical harm to the robot or its environment. In this paper, we present a method to learn accurate safety certificates for nonlinear, closed-loop dynamical systems. Specifically, we construct a neural network Lyapunov function and a training algorithm that adapts it to the shape of the largest safe region in the state space. The algorithm relies only on knowledge of inputs and outputs of the dynamics, rather than on any specific model structure. We demonstrate our method by learning the safe region of attraction for a simulated inverted pendulum. Furthermore, we discuss how our method can be used in safe learning algorithms together with statistical models of dynamical systems.},
  keywords = {Discrete time models,Learning stable dynamics,Lyapunov neural network},
  file = {C:\Users\roth\Zotero\storage\79TUQFJ2\Richards et al. - 2018 - The Lyapunov neural network Adaptive stability ce.pdf}
}

@book{rockafellar1970,
  title = {Convex Analysis},
  author = {Rockafellar, R. Tyrrell},
  date = {1970},
  eprint = {j.ctt14bs1ff},
  eprinttype = {jstor},
  publisher = {Princeton University Press},
  url = {http://www.jstor.org/stable/j.ctt14bs1ff},
  urldate = {2025-09-30},
  abstract = {Available for the first time in paperback, R. Tyrrell Rockafellar's classic study presents readers with a coherent branch of nonlinear mathematical analysis that is especially suited to the study of optimization problems. Rockafellar's theory differs from classical analysis in that differentiability assumptions are replaced by convexity assumptions. The topics treated in this volume include: systems of inequalities, the minimum or maximum of a convex function over a convex set, Lagrange multipliers, minimax theorems and duality, as well as basic results about the structure of convex sets and the continuity and differentiability of convex functions and saddle- functions.This book has firmly established a new and vital area not only for pure mathematics but also for applications to economics and engineering. A sound knowledge of linear algebra and introductory real analysis should provide readers with sufficient background for this book. There is also a guide for the reader who may be using the book as an introduction, indicating which parts are essential and which may be skipped on a first reading.},
  isbn = {978-0-691-01586-6},
  file = {C:\Users\roth\Zotero\storage\KM9PEDL8\ROCKAFELLAR - 1970 - Convex analysis.pdf}
}

@inproceedings{schlaginhaufen2021,
  title = {Learning Stable Deep Dynamics Models for Partially Observed or Delayed Dynamical Systems},
  booktitle = {Advances in Neural Information Processing Systems},
  author = {Schlaginhaufen, Andreas and Wenk, Philippe and Krause, Andreas and Dorfler, Florian},
  editor = {Ranzato, M. and Beygelzimer, A. and Dauphin, Y. and Liang, P.S. and Vaughan, J. Wortman},
  date = {2021},
  volume = {34},
  pages = {11870--11882},
  publisher = {Curran Associates, Inc.},
  url = {https://proceedings.neurips.cc/paper_files/paper/2021/file/6332a8f62e3a9d5831724f2ffe55cae0-Paper.pdf},
  keywords = {Delayed differential equations,Learning stable dynamics},
  file = {C:\Users\roth\Zotero\storage\SVDUNFDE\Schlaginhaufen et al. - 2021 - Learning stable deep dynamics models for partially.pdf}
}

@inproceedings{schoukens2016,
  title = {Cascaded Tanks Benchmark Combining Soft and Hard Nonlinearities},
  author = {Schoukens, Maarten and Mattsson, P. and Wigren, Torbj\"orn and Noel, Jean-Philippe},
  date = {2016},
  location = {Brussels, Belgium},
  url = {https://researchportal.vub.be/en/publications/cascaded-tanks-benchmark-combining-soft-and-hard-nonlinearities},
  urldate = {2024-08-21},
  abstract = {Many systems exhibit a quasi linear or weakly nonlinear behavior during normal operation, and a hard saturation effect for high peaks of the input signal. The proposed benchmark is an example of this type of nonlinear system. On top of this, only a short data record is available for the parameter estimation step. The next sections describe the cascaded tanks system (Section 2) and introduce the estimation and test data (Section 3). The figures of merit that are used in this benchmark are presented in Section 4. Finally, some of the expected challenges during the identification process are listed in Section 5.},
  eventtitle = {Workshop on {{Nonlinear System Identification Benchmarks}} 2016},
  langid = {english},
  keywords = {Datasets},
  file = {C:\Users\roth\Zotero\storage\4XY2RP3M\Schoukens et al. - Cascaded tanks benchmark combining soft and hard nonlinearities.pdf}
}

@dataset{schoukens2020,
  type = {.mat .csv},
  title = {Cascaded Tanks Benchmark Combining Soft and Hard Nonlinearities},
  author = {Schoukens, Maarten and Mattson, Per and Wigren, Torbj\"orn and No\"el, Jean-Philippe},
  namea = {{Eindhoven University Of Technology, Department Of Electrical Engineering, Control System Group}},
  nameatype = {collaborator},
  date = {2020-09-21},
  publisher = {4TU.ResearchData},
  doi = {10.4121/12960104.v1},
  url = {https://data.4tu.nl/articles/_/12960104/1},
  urldate = {2024-08-21},
  abstract = {The considered cascaded tanks system is a fluid level control system consisting of two tanks with free outlets fed by a pump. The input signal controls a water pump that delivers the water from a reservoir into the upper water tank. The water of the upper tank flows through a small opening into the lower tank, and finally through a small opening from the lower tank back into the reservoir. This benchmark combines soft and hard nonlinearities to be identified based on relatively short data records.{$<$}br{$>$}All the provided files and information on the cascaded tanks system are available for download here. The zip-file contains a detailed system description, the estimation and test input-output time series data sets, and some pictures and a video of the setup. The data is available in the .csv and .mat file format.{$<$}br{$>$}},
  version = {1},
  keywords = {Datasets},
  note = {CC BY-SA 4.0}
}

@book{seydel2010,
  title = {Practical Bifurcation and Stability Analysis},
  author = {Seydel, R.},
  date = {2010},
  series = {Interdisciplinary Applied Mathematics},
  edition = {3rd ed},
  number = {5},
  publisher = {Springer},
  location = {New York},
  isbn = {978-1-4419-1739-3},
  pagetotal = {483},
  keywords = {Stability Theory},
  annotation = {OCLC: ocn462919396}
}

@article{takeishi2021,
  title = {Learning {{Dynamics Models}} with {{Stable Invariant Sets}}},
  author = {Takeishi, Naoya and Kawahara, Yoshinobu},
  date = {2021},
  journaltitle = {Proceedings of the AAAI Conference on Artificial Intelligence},
  shortjournal = {AAAI},
  volume = {35},
  number = {11},
  pages = {9782--9790},
  issn = {2374-3468, 2159-5399},
  doi = {10.1609/aaai.v35i11.17176},
  url = {https://ojs.aaai.org/index.php/AAAI/article/view/17176},
  urldate = {2024-11-11},
  abstract = {Invariance and stability are essential notions in dynamical systems study, and thus it is of great interest to learn a dynamics model with a stable invariant set. However, existing methods can only handle the stability of an equilibrium. In this paper, we propose a method to ensure that a dynamics model has a stable invariant set of general classes such as limit cycles and line attractors. We start with the approach by Manek and Kolter (2019), where they use a learnable Lyapunov function to make a model stable with regard to an equilibrium. We generalize it for general sets by introducing projection onto them. To resolve the difficulty of specifying a to-be stable invariant set analytically, we propose defining such a set as a primitive shape (e.g., sphere) in a latent space and learning the transformation between the original and latent spaces. It enables us to compute the projection easily, and at the same time, we can maintain the model's flexibility using various invertible neural networks for the transformation. We present experimental results that show the validity of the proposed method and the usefulness for long-term prediction.},
  keywords = {After Thesis,Learning stable dynamics,Multiple equilibria},
  file = {C:\Users\roth\Zotero\storage\M545MFCB\Takeishi and Kawahara - 2021 - Learning Dynamics Models with Stable Invariant Set.pdf}
}

@book{vanderschaft2014,
  title = {Port-{{Hamiltonian}} Systems Theory: An Introductory Overview},
  shorttitle = {Port-{{Hamiltonian}} Systems Theory},
  author = {family=Schaft, given=Abraham Jan, prefix=van der, useprefix=true and Jeltsema, Dimitri},
  date = {2014},
  series = {Foundations and Trends in Systems and Control},
  number = {1, 2/3 (2014)},
  publisher = {Now},
  location = {Boston Delft},
  isbn = {978-1-60198-786-0},
  langid = {english},
  pagetotal = {216},
  keywords = {port-Hamiltonian Systems},
  file = {C:\Users\roth\Zotero\storage\LQTCL4PS\Schaft und Jeltsema - 2014 - Port-Hamiltonian systems theory an introductory o.pdf}
}

@book{verhulst1990,
  title = {Nonlinear Differential Equations and Dynamical Systems},
  author = {Verhulst, Ferdinand},
  date = {1990},
  series = {Universitext},
  publisher = {Springer},
  location = {Berlin New York Paris [etc.]},
  isbn = {3-540-50628-4},
  langid = {english},
  keywords = {Dynamic Systems Theory,Hamiltonian Systems,Stability Theory}
}

@article{wang2021,
  title = {Physics-Guided Deep Learning for Dynamical Systems: A Survey},
  author = {Wang, Rui and Yu, Rose},
  date = {2021},
  journaltitle = {arXiv e-prints},
  doi = {10.48550/arXiv.2107.01272},
  url = {http://arxiv.org/abs/2107.01272},
  keywords = {Introduction references,Overview/Survey,Physics Informed Neural ODE},
  file = {C\:\\Users\\roth\\Zotero\\storage\\CT78TTRV\\Wang und Yu - 2023 - Physics-Guided Deep Learning for Dynamical Systems.pdf;C\:\\Users\\roth\\Zotero\\storage\\3S87FZVE\\2107.html}
}

@article{willard2022,
  title = {Integrating Scientific Knowledge with Machine Learning for Engineering and Environmental Systems},
  author = {Willard, Jared and Jia, Xiaowei and Xu, Shaoming and Steinbach, Michael and Kumar, Vipin},
  date = {2022},
  journaltitle = {Acm Computing Surveys},
  shortjournal = {ACM Comput. Surv.},
  volume = {55},
  number = {4},
  publisher = {Association for Computing Machinery},
  location = {New York, NY, USA},
  issn = {0360-0300},
  doi = {10.1145/3514228},
  url = {https://doi.org/10.1145/3514228},
  abstract = {There is a growing consensus that solutions to complex science and engineering problems require novel methodologies that are able to integrate traditional physics-based modeling approaches with state-of-the-art machine learning (ML) techniques. This article provides a structured overview of such techniques. Application-centric objective areas for which these approaches have been applied are summarized, and then classes of methodologies used to construct physics-guided ML models and hybrid physics-ML frameworks are described. We then provide a taxonomy of these existing techniques, which uncovers knowledge gaps and potential crossovers of methods between disciplines that can serve as ideas for future research.},
  articleno = {66},
  issue_date = {April 2023},
  pagetotal = {37},
  keywords = {Deep learning,Introduction references,Neural Networks,Physics-guided},
  file = {C\:\\Users\\roth\\Zotero\\storage\\EPCELDTT\\Willard et al. - 2022 - Integrating scientific knowledge with machine lear.pdf;C\:\\Users\\roth\\Zotero\\storage\\Z9DWVRWC\\Willard et al. - 2020 - Integrating Scientific Knowledge with Machine Lear.pdf}
}

@article{yu2021,
  title = {{{OnsagerNet}}: {{Learning Stable}} and {{Interpretable Dynamics}} Using a {{Generalized Onsager Principle}}},
  shorttitle = {{{OnsagerNet}}},
  author = {Yu, Haijun and Tian, Xinyuan and E, Weinan and Li, Qianxiao},
  date = {2021-11-23},
  journaltitle = {Physical Review Fluids},
  shortjournal = {Phys. Rev. Fluids},
  volume = {6},
  number = {11},
  eprint = {2009.02327},
  eprinttype = {arXiv},
  eprintclass = {math},
  pages = {114402},
  issn = {2469-990X},
  doi = {10.1103/PhysRevFluids.6.114402},
  url = {http://arxiv.org/abs/2009.02327},
  urldate = {2025-03-26},
  abstract = {We propose a systematic method for learning stable and physically interpretable dynamical models using sampled trajectory data from physical processes based on a generalized Onsager principle. The learned dynamics are autonomous ordinary differential equations parameterized by neural networks that retain clear physical structure information, such as free energy, diffusion, conservative motion and external forces. For high dimensional problems with a low dimensional slow manifold, an autoencoder with metric preserving regularization is introduced to find the low dimensional generalized coordinates on which we learn the generalized Onsager dynamics. Our method exhibits clear advantages over existing methods on benchmark problems for learning ordinary differential equations. We further apply this method to study Rayleigh-Benard convection and learn Lorenz-like low dimensional autonomous reduced order models that capture both qualitative and quantitative properties of the underlying dynamics. This forms a general approach to building reduced order models for forced dissipative systems.},
  keywords = {After Thesis,Multiple equilibria,Onsager Principle},
  file = {C\:\\Users\\roth\\Zotero\\storage\\Y9622YQW\\Yu et al. - 2021 - OnsagerNet Learning Stable and Interpretable Dyna.pdf;C\:\\Users\\roth\\Zotero\\storage\\8A85P2ZR\\2009.html}
}

@article{zhang2022,
  title = {{{GFINNs}}: {{GENERIC}} Formalism Informed Neural Networks for Deterministic and Stochastic Dynamical Systems},
  shorttitle = {{{GFINNs}}},
  author = {Zhang, Zhen and Shin, Yeonjong and Em Karniadakis, George},
  date = {2022},
  journaltitle = {Philosophical Transactions of the Royal Society A: Mathematical, Physical and Engineering Sciences},
  shortjournal = {Phil. Trans. R. Soc. A.},
  volume = {380},
  number = {2229},
  pages = {20210207},
  issn = {1364-503X, 1471-2962},
  doi = {10.1098/rsta.2021.0207},
  url = {https://royalsocietypublishing.org/doi/10.1098/rsta.2021.0207},
  urldate = {2024-06-06},
  abstract = {We propose the GENERIC formalism informed neural networks (GFINNs) that obey the symmetric degeneracy conditions of the GENERIC formalism. GFINNs comprise two modules, each of which contains two components. We model each component using a neural network whose architecture is designed to satisfy the required conditions. The component-wise architecture design provides flexible ways of leveraging available physics information into neural networks. We prove theoretically that GFINNs are sufficiently expressive to learn the underlying equations, hence establishing the universal approximation theorem. We demonstrate the performance of GFINNs in three simulation problems: gas containers exchanging heat and volume, thermoelastic double pendulum and the Langevin dynamics. In all the examples, GFINNs outperform existing methods, hence demonstrating good accuracy in predictions for both deterministic and stochastic systems.             This article is part of the theme issue `Data-driven prediction in dynamical systems'.},
  langid = {english},
  keywords = {GENERIC},
  file = {C:\Users\roth\Zotero\storage\EZ9V49RZ\Zhang et al. - 2022 - GFINNs GENERIC formalism informed neural networks.pdf}
}

@inproceedings{zhong2020,
  title = {Dissipative {{SymODEN}}: Encoding {{Hamiltonian}} Dynamics with Dissipation and Control into Deep Learning},
  shorttitle = {Dissipative {{SymODEN}}},
  author = {Zhong, Yaofeng Desmond and Dey, Biswadip and Chakraborty, Amit},
  date = {2020},
  eprint = {2002.08860},
  eprinttype = {arXiv},
  eprintclass = {cs, eess, stat},
  doi = {10.48550/arXiv.2002.08860},
  url = {http://arxiv.org/abs/2002.08860},
  urldate = {2024-04-19},
  abstract = {In this work, we introduce Dissipative SymODEN, a deep learning architecture which can infer the dynamics of a physical system with dissipation from observed state trajectories. To improve prediction accuracy while reducing network size, Dissipative SymODEN encodes the port-Hamiltonian dynamics with energy dissipation and external input into the design of its computation graph and learns the dynamics in a structured way. The learned model, by revealing key aspects of the system, such as the inertia, dissipation, and potential energy, paves the way for energy-based controllers.},
  eventtitle = {{{ICLR Workshop}} on {{Integration}} of {{Deep Neural Models}} and {{Differential Equations}}},
  keywords = {Dissipative dynamics,Hamiltonian Systems,port-Hamiltonian Systems},
  file = {C\:\\Users\\roth\\Zotero\\storage\\7UJLAXJ5\\Zhong et al. - 2020 - Dissipative SymODEN Encoding Hamiltonian Dynamics.pdf;C\:\\Users\\roth\\Zotero\\storage\\9523ENHT\\2002.html}
}

\clearpage 
\appendix
\crefalias{section}{appendix}
\counterwithin{equation}{section}
\counterwithin{figure}{section}

\section{Proof of stability}\label{sec:appendix_stability_proof}
This section provides a proof of \cref{prop:stability_requirements}, which describes the essential requirements for achieving stable dynamics with \glsxtrlongpl{PHS}. Due to the relevance of convexity to the following discussion, the following Lemma first provides some widely known equivalent formulations of convexity for scalar-valued multivariate functions. 
\begin{lemma}\label{lem:convexity}
    Consider a twice continuously differentiable function $f:\bbR^n\rightarrow\bbR,\,\bsx\mapsto f(\bsx)$.
                If $f$ is convex, the following statements hold:
    \begin{enumerate}
        \item $\lambda f(\bsx_1) + (1-\lambda)f(\bsx_2) \geq f(\lambda\bsx_1 + (1-\lambda)\bsx_2)\quad\forall \bsx_1,\bsx_2\in\bbR^n,\, \lambda\in[0,1]$,
        \item $\partdiff{f}{\bsx}\big\rvert_{\bsx_1}^{\intercal}(\bsx_2-\bsx_1)\leq f(\bsx_2)-f(\bsx_1)\quad\forall \bsx_1,\bsx_2\in\bbR^n$,
                        \item $\hessian{f}{\bsx}(\bsx)\succeq0\quad\forall\bsx\in\bbR^n$.
    \end{enumerate}
\end{lemma}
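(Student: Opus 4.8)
The plan is to treat statement~1 --- the standard zeroth-order definition of convexity via convex combinations --- as the meaning of ``$f$ is convex,'' so that it holds by definition, and then to derive the first-order characterization (statement~2) and the second-order characterization (statement~3) as consequences, proceeding along the chain $1\Rightarrow2\Rightarrow3$. All three are classical, so the writeup should keep the algebra brief and instead make the two limiting arguments explicit, since these carry the entire content of the equivalences.

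For statement~2, the key manipulation is to rewrite the convex combination as $\lambda\bsx_1+(1-\lambda)\bsx_2=\bsx_1+(1-\lambda)(\bsx_2-\bsx_1)$. Substituting this into statement~1 and subtracting $f(\bsx_1)$ from both sides gives $f\bigl(\bsx_1+(1-\lambda)(\bsx_2-\bsx_1)\bigr)-f(\bsx_1)\le(1-\lambda)\bigl(f(\bsx_2)-f(\bsx_1)\bigr)$. Setting $t=1-\lambda\in(0,1]$ and dividing by $t>0$ isolates a difference quotient on the left-hand side. Letting $t\to0^+$ and invoking differentiability of $f$, this quotient converges to the directional derivative $\partdiff{f}{\bsx}\big\rvert_{\bsx_1}^{\intercal}(\bsx_2-\bsx_1)$, which yields statement~2.

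For statement~3, I would fix an arbitrary point $\bsx$ and direction $\bsv\in\bbR^n$ and apply statement~2 with $\bsx_1=\bsx$ and $\bsx_2=\bsx+t\bsv$, obtaining $f(\bsx+t\bsv)-f(\bsx)\ge t\,\partdiff{f}{\bsx}\big\rvert_{\bsx}^{\intercal}\bsv$. Since $f$ is twice continuously differentiable, Taylor's theorem gives $f(\bsx+t\bsv)-f(\bsx)=t\,\partdiff{f}{\bsx}\big\rvert_{\bsx}^{\intercal}\bsv+\tfrac{t^2}{2}\,\bsv^{\intercal}\hessian{f}{\bsx}(\bsx)\,\bsv+o(t^2)$. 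Combining the two relations cancels the first-order terms and leaves $\tfrac{t^2}{2}\,\bsv^{\intercal}\hessian{f}{\bsx}(\bsx)\,\bsv+o(t^2)\ge0$; dividing by $t^2>0$ and taking $t\to0$ gives $\bsv^{\intercal}\hessian{f}{\bsx}(\bsx)\,\bsv\ge0$. As $\bsv$ was arbitrary, the Hessian is positive semi-definite, establishing statement~3.

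The main obstacle is purely the careful handling of the two limits: verifying that the one-sided difference quotient in the proof of statement~2 indeed converges to the directional derivative, and that the Taylor remainder in statement~3 is genuinely $o(t^2)$ so that it vanishes after division by $t^2$. Both are guaranteed by the $C^2$ assumption, so these steps are routine, but they should be stated rather than glossed over, since the strength of the first- and second-order characterizations rests entirely on them.
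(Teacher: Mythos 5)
Your proof is correct. The paper itself does not prove this lemma --- it refers the reader to \cite{boyd2023} --- and your chain $1\Rightarrow2\Rightarrow3$ is precisely the standard argument found in that reference: statement~1 taken as the definition of convexity, the first-order condition obtained by passing to the limit $t\to0^+$ in the difference quotient $t^{-1}\bigl(f(\bsx_1+t(\bsx_2-\bsx_1))-f(\bsx_1)\bigr)\le f(\bsx_2)-f(\bsx_1)$, and the second-order condition by cancelling the first-order terms against a Peano-form Taylor expansion along $\bsx+t\bsv$ and dividing by $t^2$. Both limiting steps are exactly where the content lies, and both are justified by the $C^2$ hypothesis as you note (differentiability gives convergence of the one-sided quotient to the directional derivative; twice differentiability gives the $o(t^2)$ remainder), so there is nothing to fix. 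The only cosmetic alternative worth mentioning is the common reduction to the scalar function $g(t)=f(\bsx+t\bsv)$, which packages the same two limits as one-dimensional calculus facts; your direct multivariate version is equally rigorous and no longer. Note also that the lemma only asserts the forward implications (convexity implies 1--3), so treating statement~1 as the definition is exactly the right reading --- you are not obligated to prove the converses, although your arguments for $2\Rightarrow3$ would reverse with little extra work if one wanted the full equivalence that the cited reference establishes.
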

For a proof, the reader is referred to \cite{boyd2023}. 
\revision{Note that for strict convexity, the inequalities in statements 1 and 2 must be strict for all $\bsx_1\neq\bsx_2$, and the Hessian in statement 3 must be positive definite.}
With the notion of convexity established, attention can now be directed to the proof of \cref{prop:stability_requirements}. For convenience, the statement of the theorem is repeated verbatim below:
\begin{theorem}\label{prop:stability_requirements_appendix}
    Consider the \gls{PHS} \cref{eq:isphs_evolution} in the unforced case $\bsu(t)=\bszero$:
    \begin{equation}\label{eq:isphs_auto_evolution_appendix}
        \dot{\bsx} = \left[\bsJ(\bsx) - \bsR(\bsx)\right]\partdiff{\hamiltonian}{\bsx}(\bsx), \quad \text{with}\quad\bsJ=-\bsJ^{\intercal},\,\bsR=\bsR^{\intercal}\succeq0.
    \end{equation}
    Suppose the Hamiltonian $\hamiltonian(\bsx)$ is convex, twice continuously differentiable, and fulfills:
    \begin{align}\label{eq:prop_convex_requirements_appendix}
        \hamiltonian(\bszero)=0,&&
        \partdiff{\hamiltonian}{\bsx}\bigg\rvert_{\bsx=\bszero}=\bszero,&&
        \hessian{\hamiltonian}{\bsx}\bigg\rvert_{\bsx=\bszero}\succ0.
    \end{align}
    Then, the system in \cref{eq:isphs_auto_evolution_appendix} has a stable equilibrium at $\bsx(t)=\bszero$, and all solutions are bounded. Furthermore, the equilibrium is globally asymptotically stable if $\bsR(\bsx)\succ0$.
\end{theorem}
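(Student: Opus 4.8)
The plan is to use the Hamiltonian $\hamiltonian$ itself as a Lyapunov function, exactly as anticipated in \cref{sec:phs}, and to verify in turn the three hypotheses of the global asymptotic stability theorem: positive definiteness, the decrease condition, and radial unboundedness. First I would observe that $\bszero$ is genuinely an equilibrium, since $\partdiff{\hamiltonian}{\bsx}\big\rvert_{\bszero}=\bszero$ forces the right-hand side of \cref{eq:isphs_auto_evolution_appendix} to vanish there irrespective of $\bsJ(\bszero)$ and $\bsR(\bszero)$.

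For positive definiteness, the first-order characterization of convexity (statement 2 of \cref{lem:convexity}) applied at $\bsx_1=\bszero$, combined with $\hamiltonian(\bszero)=0$ and the vanishing gradient, immediately yields $\hamiltonian(\bsx)\geq 0$ everywhere, so $\bszero$ is a global minimizer. To upgrade this to strict positivity off the origin I would argue by contradiction: if $\hamiltonian(\bsx_0)=0$ for some $\bsx_0\neq\bszero$, convexity forces $\hamiltonian$ to vanish identically on the segment $[\bszero,\bsx_0]$, which makes the second directional derivative along $\bsx_0$ vanish at the origin and contradicts $\hessian{\hamiltonian}{\bsx}\big\rvert_{\bszero}\succ0$. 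The decrease condition is then the computation already displayed in \cref{eq:isphs_energy_balance}: along trajectories $\dot\hamiltonian = \partdiff{\hamiltonian}{\bsx}^{\intercal}[\bsJ-\bsR]\partdiff{\hamiltonian}{\bsx}$, where skew-symmetry of $\bsJ$ annihilates the first term (a scalar equal to its own negative) and $\bsR\succeq 0$ renders the second term nonpositive, so $\dot\hamiltonian\leq 0$ and Lyapunov stability of $\bszero$ follows.

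The step I expect to be the main obstacle is radial unboundedness, because convexity together with a positive definite Hessian \emph{only} at the origin guarantees growth along each individual ray but does not a priori deliver the uniform growth $\hamiltonian(\bsx)\to\infty$ as $\norm{\bsx}\to\infty$ that the global theorem demands. I plan to secure uniformity by a scaling argument: continuity and compactness give a radius $r_0>0$ and a constant $m>0$ with $\hamiltonian(\bsx)\geq m$ on the sphere $\norm{\bsx}=r_0$ (using the strict positivity just established). Then for any $\bsy$ with $\norm{\bsy}=R>r_0$, writing the sphere point $\tfrac{r_0}{R}\bsy$ as a convex combination of $\bszero$ and $\bsy$ gives $\hamiltonian(\tfrac{r_0}{R}\bsy)\leq\tfrac{r_0}{R}\hamiltonian(\bsy)$, whence $\hamiltonian(\bsy)\geq\tfrac{R}{r_0}\,m\to\infty$. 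With radial unboundedness in hand, boundedness of all solutions is immediate: $\dot\hamiltonian\leq 0$ confines each trajectory to the sublevel set $\{\bsx:\hamiltonian(\bsx)\leq\hamiltonian(\bsx(t_0))\}$, which is closed and bounded, hence compact; this also precludes finite-time blow-up and thus guarantees global existence in forward time.

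Finally, for the case $\bsR(\bsx)\succ0$ I would sharpen the decrease condition to strict negativity: $\dot\hamiltonian=-\partdiff{\hamiltonian}{\bsx}^{\intercal}\bsR\,\partdiff{\hamiltonian}{\bsx}<0$ whenever $\partdiff{\hamiltonian}{\bsx}\neq\bszero$, and since the strict convexity established above makes $\bszero$ the unique stationary point of $\hamiltonian$, this means $\dot\hamiltonian<0$ for all $\bsx\neq\bszero$. Combining positive definiteness, radial unboundedness of $\hamiltonian$, and negative definiteness of $\dot\hamiltonian$ then yields global asymptotic stability via the Barbashin--Krasovskii (global Lyapunov) theorem.
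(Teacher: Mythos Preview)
Your proposal is correct and follows essentially the same route as the paper: both use $\hamiltonian$ as the Lyapunov function, establish positive definiteness and radial unboundedness by combining convexity with the positive-definite Hessian at the origin (the paper packages your sphere-scaling step as an explicit linear lower bound $g(\bsx)=\tfrac{a}{r}\norm{\bsx}$, but the argument is identical), and then deduce boundedness from sublevel sets and global asymptotic stability from strict decrease when $\bsR\succ0$. One small wording point: in the final paragraph you invoke ``strict convexity established above,'' but you never proved global strict convexity (and the hypotheses do not imply it)---what you actually showed is positive definiteness, which is enough, since any stationary point of a convex function is a global minimizer and you already proved $\bszero$ is the only one.
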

\begin{proof}
    Since the gradient of the Hamiltonian vanishes and its Hessian is positive definite \revision{at $\bsx=\bszero$}, it has a strict local minimum at the origin. This implies the existence of an $R>0$ such that 
    \begin{equation}\label{eq:proof_minimum_def}
    \hamiltonian(\bsx)>\hamiltonian(\bszero)=0 \quad \forall 
       \bsx\in\{\norm{\bsx}\leq R,\,\bsx\neq\bszero\}.
    \end{equation}
    Consider the function $g(\bsx)=\frac{a}{r}\norm{\bsx}$ for some $r\in(0,R]$ with $a=\inf_{\norm{\bsx}=r}\hamiltonian(\bsx)>0$. We claim that $g$ is a lower bound for $\hamiltonian$, i.e. $g(\bsx)\leq\hamiltonian(\bsx)$ for all $\norm{\bsx}>r$.

    Suppose there existed an $\tilde\bsx\in\bbR^n$ that violated the claim, that is $\norm{\tilde\bsx}>r$ and $g(\tilde\bsx)>\hamiltonian(\tilde\bsx)$. Due to the convexity of $\hamiltonian$, it holds
    \begin{equation}
    \lambda\hamiltonian(\tilde\bsx)+(1-\lambda)\hamiltonian(\bszero)\geq\hamiltonian(\lambda\tilde\bsx+(1-\lambda)\cdot\bszero) \quad\forall \lambda\in[0,1]
    \end{equation}
    and thus
    $\lambda g(\tilde\bsx)>\lambda\hamiltonian(\tilde\bsx)\geq\hamiltonian(\lambda\tilde\bsx).$
    Choosing $\lambda=\frac{r}{\norm{\tilde\bsx}}$ and applying the definition of $g$ we obtain
    \begin{equation}
    \frac{r}{\norm{\tilde\bsx}}g(\tilde\bsx) = a > \hamiltonian\bigg(\frac{r}{\norm{\tilde\bsx}}\tilde\bsx\bigg)\geq a.
    \end{equation}
    This is a contradiction, and therefore $g$ is a lower bound.

    With the bound established, showing that $\hamiltonian$ is radially unbounded is straightforward. 
    For any path $\bsx(t)$ with $\lim_{t\rightarrow\infty}\norm{\bsx(t)}=\infty$, once $\norm{\bsx(t)}>r$ we have
    \begin{equation}
    \lim_{t\rightarrow\infty}\hamiltonian(\bsx(t)) \geq \lim_{t\rightarrow\infty} g(\bsx(t)) = \infty.
    \end{equation}
    \revision{We would like to remark that the same conclusion could alternatively be reached by showing that $\hamiltonian$ has at least one bounded sublevel set due to its strict minimum and then concluding radial unboundedness (i.e., coercivity) using Corollary 8.7.1 in \cite{rockafellar1970} together with the standard equivalence for proper closed convex functions. However, the presented approach using a lower bound can directly be extended to show global positive definiteness of $\hamiltonian$.
    For this,} note that for all $\norm{\bsx}>r$ the lower bound implies 
    $0<g(\bsx)\leq\hamiltonian(\bsx).$
    Together with \cref{eq:proof_minimum_def}, this implies that $\hamiltonian$ is indeed positive definite for all $\bsx\in\bbR^n$.

    From the port-Hamiltonian structure of \cref{eq:isphs_auto_evolution_appendix} we immediately obtain the decrease condition $\dot\hamiltonian\leq 0$ (see \cref{eq:isphs_energy_balance}).
    Using the $\hamiltonian$ as a Lyapunov function, we obtain all requirements to conclude \emph{local} stability of the equilibrium.
    Furthermore, the decrease condition implies that a solution starting at $\bsx^*$ cannot leave the sublevel set $\Omega_\hamiltonian=\{\bsx\in\bbR^n\vert\hamiltonian(\bsx)\leq\hamiltonian(\bsx^*)\}$.
    Since $\Omega_\hamiltonian\subseteq\Omega_g=\{\bsx\in\bbR^n\vert g(\bsx)\leq\hamiltonian(\bsx^*)\}$ and $\Omega_g$ is clearly bounded, all solutions of \cref{eq:isphs_auto_evolution_appendix} must be bounded as well.

    Finally, to show that \emph{global} asymptotic stability follows from $\bsR\succ0$, it remains to be demonstrated that $\dot\hamiltonian<0$ for all $\bsx\neq\bszero$. 
    This is implied by \cref{eq:isphs_energy_balance} if $\partdiff{\hamiltonian}{\bsx}\neq\bszero$ for all $\bsx\neq\bszero$. Proving that the gradient of $\hamiltonian$ vanishes only at the origin is done by contradiction. Assume that there exists an $\tilde\bsx\neq\bszero$ such that $\partdiff{\hamiltonian}{\bsx}\big\rvert_{\tilde\bsx}=\bszero$. Furthermore, let $\bsx^*=\bszero$. 
    Since $\hamiltonian$ is convex it follows (see \cref{lem:convexity}) that
    \begin{align}
        \hamiltonian(\bsx^*)-\hamiltonian(\tilde\bsx) \geq \partdiff{\hamiltonian}{\bsx}\bigg\rvert_{\tilde\bsx}^{\intercal}(\bsx^*-\tilde\bsx) = 0,\\
        \hamiltonian(\tilde\bsx)-\hamiltonian(\bsx^*) \geq \partdiff{\hamiltonian}{\bsx}\bigg\rvert_{\bsx^*}^{\intercal}(\tilde\bsx-\bsx^*) = 0.
    \end{align}
    This implies $\hamiltonian(\bsx^*)=\hamiltonian(\tilde\bsx)=0$. However, since $\hamiltonian$ is globally positive definite and thus only vanishes at the origin, we have $\bsx^* = \tilde\bsx$. This is a contradiction as $\tilde\bsx\neq\bszero$, which concludes the proof.
\end{proof}

\section{Experiment hyperparameters} \label{sec:experiment_hyperparameters_appendix}
In each experiment presented within this manuscript, the different models were trained for the same number of steps using identical learning rates. The mean squared error loss function and the ADAM \cite{kingma2015} optimizer were used throughout.
The model and training hyperparameters, along with the number of model instances trained for statistical evaluation, are provided in \cref{tab:hyperparameters}. 
Throughout the experiments presented in \crefrange{sec:spinning_rigid_body}{sec:chicken_data}, all \glspl{NN} consisted of two hidden layers, each containing \num{16} neurons. The network width was increased to \num{32} neurons for the additive manufacturing surrogate in \cref{sec:thermal_field_data}. All network weights were initialized using the Glorot uniform \cite{glorot2010} scheme. 

\begin{table}[b]
\def\statedependent{s.}
\def\constant{c.}
\caption{Hyperparameters per experiment. TF and DF refer to trajectory and derivative fitting, respectively. The number of instances trained per model type is listed in the "\# inst." column, and $n$ and $m$ denote the number of dimensions of state and input. The final three columns list the parametrization of the matrices for \mymodel, \revision{\mymodel-LM}, \gls{bPHNN}, and \PHNN as either state-dependent matrices (\statedependent) or constant matrices (\constant).}
\label{tab:hyperparameters}
\begin{center}
\begin{small}
\begin{tabular}{lcccccccc} \toprule
Experiment & \# training steps & learning rate & \# inst. & $n$ & $m$ & \multicolumn{3}{c}{matrices} \\
&&&&&&$\bsJ$ & $\bsR$ & $\bsG$ \\
\midrule
\hyperref[fig:random_network]{Randomly initialized model} & -                   &  -            & \num{1}  & \num{2} & - & \statedependent & \statedependent & - \\
\hyperref[sec:spinning_rigid_body]{Spinning rigid body}        & \num{50000} DF      & \num{1d-3}    & \num{10} & \num{3} & - & \statedependent & \constant & -\\
\hyperref[sec:cascaded_tanks]{Cascaded tanks}             & \num{50000} TF      & \num{5d-4}    & \num{20} & \num{2} & \num{1} & \constant & \constant & \constant\\
\hyperref[sec:chicken_data]{Food processing surrogate}  & \num{30000} TF      & \num{1d-4}    & \num{20} & \numrange{2}{5} & \num{1} & \constant & \constant & \constant\\
\multirow{2}{*}{\makecell[l]{\hyperref[sec:thermal_field_data]{Additive manufacturing}\\\hphantom{m}\hyperref[sec:thermal_field_data]{surrogate}}}  & \num{20000} DF      & \num{1d-3}    & \multirow{2}{*}{\num{20}} & \multirow{2}{*}{\num{40}} & \multirow{2}{*}{\num{40}} & \multirow{2}{*}{\constant} & \multirow{2}{*}{\constant} & \multirow{2}{*}{\constant}\\
                   &  and \num{10000} TF & \num{1d-5}    &          &  &\\
\bottomrule
\end{tabular}
\end{small}
\end{center}
\vskip -0.1in
\end{table}

The \gls{PHS}-based models (\mymodel, \mymodel-LM, \PHNN, \gls{bPHNN}) share the same structure, differing only in how the Hamiltonian $\hamiltonian$ is parametrized.
Specifically, the \mymodels use an \gls{FICNN} with a normalization term, as described in \cref{sec:sPHNN}. 
\revision{The only difference for the \mymodel-LM variant is that the minimum $\bsx^*$ of the Hamiltonian is learned as an additional parameter.}
The \PHNNs, by contrast, use an unconstrained \gls{FFNN} without any normalization. 
Finally, the \glspl{bPHNN} ensures positivity and radially unboundedness of the Hamiltonian by adopting the method described by \citet{yu2021} for the OnsagerNets Energy function. Here, the output of an unconstrained network is squared and combined with a quadratic regularization term. However, unlike OnsagerNet, we used standard \glspl{FFNN} instead of ResNets.

The models were trained on a Windows machine with Intel i7-13700K (13th Generation) and \qty{32}{GB} of memory. 
\revision{
\Cref{tab:computational_costs} provides representative computational costs for training and prediction. 
Here, \emph{Derivative evaluation} refers to the average time taken by each model to compute the state derivative~$\dot\bsx$ for a given state~$\bsx$ and input~$\bsu$, which needs to be evaluated for time integration. 
In contrast, \emph{Model integration} denotes the time required to generate a single trajectory prediction via integration. 
While the latter better reflects practical use cases, the reported integration time also depends on the integration scheme and chosen step size. 
Since we use an adaptive step size controller, the step size in turn depends on the learned dynamics.
}

\begin{table}
    \caption{\revision{Representative computational costs for training and prediction. The top two rows report the training times for derivative and trajectory fitting, respectively. The bottom two rows show the time for a single derivative evaluation and a full trajectory integration using data and models from \cref{sec:chicken_data}.}}
    \label{tab:computational_costs}
    \begin{center}
    \begin{small}
        \begin{tabular}{lrrrrrr}
            \toprule
            & \mymodel & \mymodel-LM & \gls{sNODE} & \gls{bPHNN} & \PHNN & \gls{NODE} \\
            \midrule
            Derivative fit \cref{sec:spinning_rigid_body} (\unit{\minute})  & 0.577  & 0.644  & 0.672  & 0.635  & 0.539  & 0.346 \\
            Trajectory fit \cref{sec:chicken_data} (\unit{\minute})         & 37.663 & 39.160 & -      & 38.329 & 19.111 & 17.951 \\
            \midrule
            Derivative evaluation (\unit{\milli\second})       & 0.007 & 0.007 & - & 0.007 & 0.007 & 0.005 \\
            Model integration (\unit{\milli\second})           & 0.365 & 0.393 & - & 0.422 & 0.237 & 0.221 \\
        \bottomrule
        \end{tabular}
    \end{small}
    \end{center}
\end{table}

\section{Spinning rigid body system} \label{sec:spinning_rigid_body_apendix}
\revision{The governing equations of the three-dimensional spinning rigid body used in the experiment in \cref{sec:spinning_rigid_body} are described in \cref{sec:phs}.}
We generated $(\bsomega, \dot\bsomega)$ pairs for training by evaluating \cref{eq:euler_equations} for varying $\bsomega$. To simulate a measurement process, we selected $\bsomega$ values from \qty{50}{\second} long numerically integrated trajectories, using \num{10} randomly chosen initial conditions $\bsomega(0) \in [0,1]^3$ with parameters $\bsI = \diag(1, 2, 3)$ and $\mu = 0.01$.
A representative trajectory of $\bsomega$ is shown in \cref{fig:spinning_rigid_body_training_data}.

\begin{figure}[ht]
    \centering
    \inputTikzWithExternalization{spinning_rigid_body_training_data}{tikz/spinning_rigid_body/training_trajectory.tex}
    \caption{{Representative, extended training trajectory for spinning rigid body, including interquartile mean and range of model predictions for $\omega_3$. Left: Models without stability bias; Right: Models with stability bias.}}
    \label{fig:spinning_rigid_body_training_data}
    \end{figure}

\section{Cascaded tanks data} \label{sec:cascaded_tanks_appendix}

The cascaded tanks dataset \cite{schoukens2020, schoukens2016} used in \cref{sec:cascaded_tanks} consists of measurements of a fluid level control system. 
Liquid flows from an upper tank through an outlet into a lower tank, exits via a similar outlet into a reservoir, and is pumped back to the upper tank, see \cref{fig:cascaded_tanks_setup}. The system's input $u$ is the voltage applied to the pump, with higher values resulting in a faster volume flow.
When the pump speed is large enough, an overflow can occur in the upper tank.
A capacitive sensor measures the fluid level in the lower tank, and the resulting voltage represents the system’s output $y$. 
The dataset consists of two trajectories comprising 1024 pairs $(u, y)$ each and \cref{fig:cascaded_tanks.training_data} depicts the training trajectory. 
The initial water level in the upper tank is unknown, but it is the same for both trajectories.

Besides the experiments presented in \cref{sec:cascaded_tanks}, we further investigated the models' capabilities to learn the hard saturation that the cascaded tanks data system exhibits at an output value of \qty{10}{\volt}. This occurs when the tanks overflow in response to a large input voltage over an extended period of time (see \cref{fig:cascaded_tanks.training_data}) \cite{schoukens2016}. We explore the models' behavior in this regime by constructing a pump voltage signal that linearly increases from \qty{1}{\volt} initially to \qty{8}{\volt} at \qty{2800}{\second}. 
The resulting predictions are shown in \cref{fig:cascaded_tanks_saturation}. For pump speeds up to $\sim$\qty{7}{\volt} ($\sim$\qty{2400}{\second}), the \mymodel, \mymodel-LM, and \gls{bPHNN} instances capture the saturation effect, while the \NODE and \PHNN instances fail to do so.

\begin{figure}[t]
    \centering
    \hspace*{0.5cm}
    \begin{subfigure}[b]{0.26\textwidth}
        \centering
        \resizebox{\columnwidth}{!}{%
            \inputTikzWithExternalization{setup_sketch}{tikz/cascaded_tanks/setup_sketch.tex}
        }
        \caption{Experimental setup}
        \label{fig:cascaded_tanks_setup}
    \end{subfigure}
        \begin{subfigure}[b]{0.6\columnwidth}
        \centering
        \inputTikzWithExternalization{cascaded_tanks_training_data}{tikz/cascaded_tanks/training_data.tex}
        \caption{Cascaded tanks training data (overflow highlighted).}
        \label{fig:cascaded_tanks.training_data}
    \end{subfigure}\\
    \vspace*{2mm}
    \begin{subfigure}[t]{\columnwidth}
        \centering\inputTikzWithExternalization{cascaded_tanks_saturation}{tikz/cascaded_tanks/saturation_all_in_two.tex}
        \subcaption{Saturation.}
        \label{fig:cascaded_tanks_saturation}
    \end{subfigure}
    \caption{\emph{Cascaded tanks}: \subref{fig:cascaded_tanks_setup} System setup, \subref{fig:cascaded_tanks.training_data} Training trajectory. \subref{fig:cascaded_tanks_saturation} Predictions for a linearly increasing pump voltage. Due to an overflow, the true system exhibits a hard saturation at $x_2=\qty{10}{\volt}$. Lines correspond to the interquartile mean, whereas shaded areas represent the interquartile range of the predictions from the \num{20} model instances.}
    \label{fig:cascaded_tanks_appendix}
    \vspace*{-0.5cm}
\end{figure}

\section{Thermal food processing surrogate data} \label{sec:chicken_data_appendix}
This section provides additional details about the data used in \cref{sec:chicken_data}, which is a subset of the data generated and used by \citet{kannapinn2022} in the creation of \glspl{ROM}. The dataset describes the thermal processing of chicken breasts in a convection oven and was generated via the simulation of a soft-matter model for meats implemented in COMSOL Multiphysics.
Each trajectory in the dataset describes the temperature history at two points inside the \gls{FOM}. These result from a predefined excitation signal that controls the oven temperature. 
The probe points are situated in the center and close to the surface of the food item, with the corresponding temperatures labeled as $T_A$ and $T_B$, respectively. 
To facilitate cross-referencing, the alphanumeric identifiers used by \citet{kannapinn2022} for the individual trajectories are retained. The training data consists of trajectories \num{745} and \num{795}. 
They use an \gls{APRBS} and a multi-sine as the excitation signal $T_{\text{oven}}$. 
\Cref{fig:chicken_data_training_data} (left and middle) illustrates $T_{\text{oven}}$ along with the core and surface temperatures for both trajectories. 
This selection provides a fair evaluation of the models across the relevant temperature spectrum.
\revision{For studying the impact of the number of training trajectories on the models, we additionally included \num{13} trajectories with identifiers \num{835}, \num{851}, \num{853}, \num{861}, \num{868}, \num{873}, \num{874}, \num{888}, \num{899}, \num{902}, \num{905}, \num{906} and \num{917}.}
In the experiment, the test group AP15 \cite{kannapinn2022} is utilized as test data. It comprises \num{15} trajectories with \gls{APRBS} excitations, specifically selected to ensure an even distribution of amplitudes and median values. 
\Cref{fig:chicken_data_training_data} (right) shows an exemplary test trajectory, including the predictions for $T_A$ from all models using \num{3} augmented dimensions.
All trajectories span \qty{1395}{\second} and consist of \num{280} samples. 
The data is normalized for training to ensure the numerical values are of magnitude one. For this, an affine transformation is applied to $T_{\text{oven}}$, $T_A$, and $T_B$ individually. It shifts the ambient temperature to a value of zero and rescales the signal to have unit variance. The scaling and shifting factors are calculated using only the training data to ensure that the test data does not influence the training procedure.

\begin{figure}[ht]
    \centering
        \inputTikzWithExternalization{chicken_data_dataset_examples}{tikz/chicken_data/dataset_examples.tex}
    \caption{\emph{Left and middle}: Training trajectories. \emph{Right}: Exemplary test trajectory with interquartile mean and range of the model predictions for $T_A$.}
    \label{fig:chicken_data_training_data}
    \end{figure}

\section{Thermal field data} \label{sec:thermal_field_data_appendix}

\subsection{Dimensionality reduction}
The following describes the \gls{POD} method for dimensionality reduction of the field data applied in \cref{sec:thermal_field_data}. 
The process is first described for general field data given as a matrix $\bsA = [\bsa_0, \dots, \bsa_M]^\intercal$ of $M$ snapshots $\bsa_i\in\bbR^N$, where $M$ is the number of snapshots and $N$ is the number of \gls{FE} nodes.
To identify the dominant modes of the data, we perform a \gls{SVD} of $\bsA$:
\begin{equation}
    \bsA = \bsU\bsSigma\bsV^{\intercal}\in\bbR^{M\times N},
\end{equation}
where $\bsU\in\bbR^{M\times M}$ and $\bsV\in\bbR^{N\times N}$ are unitary matrices and $\bsSigma\in\bbR^{M\times N}$ is a rectangular diagonal matrix containing the singular values $\sigma_1\geq \dots \geq \sigma_M$ on its diagonal. 
Dimensionality reduction is achieved by truncating the \gls{SVD} $\bsA\approx\tilde\bsU\tilde\bsSigma\tilde\bsV^{\intercal}$, where $\tilde\bsU\in\bbR^{M\times n}$ and $\tilde\bsV\in\bbR^{N\times n}$ contain the first $n$ columns of $\bsU$ and $\bsV$, and $\tilde\bsSigma=\diag(\sigma_1,\dots,\sigma_n)$ is the truncated matrix of the dominant singular values. 
The dominant modes in the data are then given by $\bsM=c\tilde\bsV$. Here, the scaling factor $c$ is the standard deviation of the elements in $\tilde\bsU\tilde\bsSigma$. This scaling ensures that the latent trajectories will have unit variance. 
To map a latent state $\bsx\in\bbR^{n}$ to the full-order field $\bsX\in\bbR^{N}$ we compute:
\begin{equation}\label{eq:svd_latent_to_fom}
    \bsX = \bsM\bsx.
\end{equation}
Conversely, to map from the full-order field to the latent space, we solve the least-squares regression problem:
\begin{equation}\label{eq:svd_fom_to_latent}
    \bsx=\argmin_{\bsx'}\norm{\bsX - \bsM\bsx'} = \frac{1}{c^2}\bsM^\intercal\bsX.
\end{equation}
In \cref{sec:thermal_field_data}, we consider two fields: The thermal field describing the state of the system and the excitation given by the heat source field. For each, a separate \gls{SVD} is performed to compute $\bsM_T$ and $\bsM_q$ using the respective snapshots from the two training trajectories. 
We thus obtain two pairs of encoders $\psi$ and decoders $\phi$:
\begin{align}
    \psi_T(\bsT) &= \argmin_{\bsx}\norm{\bsT - \bsM_T\bsx}, & \phi_T(\bsx) &= \bsM_T\bsx,\\
    \psi_q(\bsq) &= \argmin_{\bsu}\norm{\bsq - \bsM_q\bsu}, & \phi_q(\bsu) &= \bsM_q\bsu.
\end{align}

To fully leverage the benefits of \mymodels, it is essential to ensure that the absence of a heat source is encoded as the zero-vector in the corresponding latent space. Since the mapping from the latent space \cref{eq:svd_latent_to_fom} is linear, and $\bsq=\bszero$ corresponds to the absence of a heat source in the full-order space, this requirement is automatically fulfilled.
However, for applications where a nonzero state $\tilde\bsx$ describes the absence of an external energy source or for general nonlinear latent mappings (such as \glspl{AE}), the condition can be met by applying a constant translation in the latent space, which shifts the encoded state $\tilde\bsx = \psi(\tilde\bsx)$ to the origin. 
In \cref{sec:thermal_field_data}, we use this method to ensure that the thermal equilibrium corresponds to the origin of the temperature latent space. While this is not strictly necessary, knowledge of the equilibrium location in the latent space enables its integration into the \mymodel's architecture by fixing the stable equilibrium position. 
Consequently, we arrive at the representation of the latent state
\begin{align}
    \bsx &= \psi_T(\bsT) - \psi_T(\bsT_{\text{eq}}), & \bsT &\approx \phi_T(\bsx + \psi_T(\bsT_{\text{eq}})).
\end{align}
where $\bsT_{\text{eq}}$ is the full-order state vector describing thermal equilibrium.
The complete model architecture is represented in \cref{fig:ded_architecture}. There, the function $\bsf$ is used as a placeholder for an arbitrary evolution equation, to generalize to all model architectures used in \cref{sec:thermal_field_data}.

\begin{figure}[ht]
    \centering
    \resizebox{0.8\textwidth}{!}{
    \inputTikzWithExternalization{full_architecture}{tikz/thermal_ded/full_architecture.tex}}
        \caption{Model setup used in \cref{sec:thermal_field_data}. The initial temperature field $\bsT(t_0)$ and input field $\bsq(t)$ are mapped via encoders $\phi_T, \phi_q$ into the latent space, where the latent dynamics $\bsf$ are integrated. Here, $\bsf$ is used as a placeholder for a specific model, e.g., \NODE or \mymodel. Finally, the predicted latent trajectory $\bsx(t)$ is mapped to the full-order field via the decoder $\phi_T$.}
    \label{fig:ded_architecture}
    \vskip -0.2cm
\end{figure}

Finally, we note that alternative dimensionality reduction techniques could be applied. In preliminary experiments, we explored the use of autoencoders for this purpose. Although autoencoders offer advantages over the linear \gls{SVD} regarding achievable compression rates, they exhibited poor extrapolation performance when trained on sparse data. This issue could potentially be mitigated by training the autoencoders with additional synthetic data, which would not necessitate costly simulations; however, further research is needed to explore this approach.

\subsection{Additional results}\label{sec:thermal_field_data_appendix_additional_results}
We briefly present additional results from \cref{sec:thermal_field_data} to demonstrate that the differences in predictions across the various model architectures remain consistent across multiple instances.
\Cref{fig:ded_mesh_rmses_over_time} depicts the \glspl{RMSE} of each model from \cref{sec:thermal_field_data} computed over all \num{25} trajectories and the spatial domain. The behavior of the models with stability bias can be distinguished from the \NODE and \PHNN, which suffer from instabilities that result in rapidly increasing errors.
While both \mymodel variants and \glspl{bPHNN} exhibit decreasing errors as the system returns to thermal equilibrium, \mymodels achieve a peak \gls{RMSE} nearly an order of magnitude lower, due to their strong inductive bias from enforcing global stability.
\revision{The \mymodel-LM performs almost identically to the \mymodel initially, while showing slightly larger errors in the convergence to equilibrium. This is due to slight errors in the learned equilibrium position, which can be addressed by increasing the amount of training data, as demonstrated in \cref{sec:chicken_data}.}

\begin{figure}[ht]
    \centering
    \inputTikzWithExternalization{ded_errors_over_time}{tikz/thermal_ded/errors_all_trajectories.tex}
        \caption{Interquartile mean and range of the \glsxtrshortpl{RMSE} evaluated with all \num{25} trajectories of the thermal field data.}
    \label{fig:ded_mesh_rmses_over_time}
\end{figure}

\end{document}